\newcommand{\fullpaper}{}
\newcommand{\conferencepaper}[1]{}
\newenvironment{algorithm}[1][\  ] %
{
\rm
\begin{tabbing} 
....\=.....\=.....\=.....\=.....\=  \+ \kill
} %
{\end{tabbing}
}
\newtheorem{lemma}{Lemma}
\newtheorem{corollary}{Corollary}
\newtheorem{theorem}{Theorem}
\newtheorem{assumption}{Assumption}
\newcommand{\x}{{\mathbf x}}
\newcommand{\z}{{\mathbf z}}
\newcommand{\w}{{\mathbf w}}
\renewcommand{\v}{{\mathbf v}}
\newcommand{\sgn}{{\mathrm{sgn}}}
\newcommand{\BlackBox}{\rule{1.5ex}{1.5ex}}  
\newenvironment{proof}{\par\noindent{\bf Proof\ }}{\hfill\BlackBox\\[2mm]}
\newcommand{\reals}{\mathbb{R}}
\newcommand{\D}{\mathcal{D}}
\newcommand{\X}{\mathcal{X}}
\newcommand{\V}{\mathbb{V}}
\newcommand{\N}{\mathbb{N}}
\DeclareMathOperator*{\E}{\mathbb{E}}
\DeclareMathOperator*{\prob}{\mathbb{P}}
\newcommand{\inner}[1]{\langle #1 \rangle}
\newcommand{\thalf}{\tfrac{1}{2}}
\newcommand{\indct}[1]{\mathbf{1}(#1)}
\newcommand{\err}{\mathrm{err}}
\newcommand{\eqdef}{\stackrel{\mathrm{def}}{=}}
\newcommand{\erf}{\mathrm{erf}}
\newcommand{\sig}{\mathrm{sig}}
\newcommand{\pw}{\mathrm{pw}}
\newcommand{\poly}{\mathrm{poly}}
\DeclareMathOperator*{\argmin}{argmin} 
\renewcommand{\eqref}[1]{Equation~(\ref{#1})}
\newcommand{\figref}[1]{Figure~\ref{#1}}
\newcommand{\secref}[1]{Section~\ref{#1}}
\newcommand{\appref}[1]{Appendix~\ref{#1}}
\newcommand{\thmref}[1]{Theorem~\ref{#1}}
\newcommand{\lemref}[1]{Lemma~\ref{#1}}
\title{
Learning Kernel-Based Halfspaces with the Zero-One Loss
}
\author{
Shai Shalev-Shwartz\\
The Hebrew University\\
\texttt{shais@cs.huji.ac.il}
\And
Ohad Shamir\\
The Hebrew University\\
\texttt{ohadsh@cs.huji.ac.il}
 \And
Karthik Sridharan \\
Toyota Technological Institute\\
\texttt{karthik@tti-c.org}
}
\begin{document}

\maketitle

\begin{abstract} 

  We describe and analyze a new algorithm for agnostically learning
  kernel-based halfspaces with respect to the \emph{zero-one} loss
  function. Unlike most previous formulations which rely on surrogate convex
  loss functions (e.g. hinge-loss in SVM and log-loss in logistic regression),
  we provide finite time/sample guarantees with respect to the more natural
  zero-one loss function. The proposed algorithm can learn kernel-based
  halfspaces in worst-case time $\poly(\exp(L\log(L/\epsilon)))$, for
  $\emph{any}$ distribution, where $L$ is a Lipschitz constant (which can be
  thought of as the reciprocal of the margin), and the learned classifier is
  worse than the optimal halfspace by at most $\epsilon$. We also prove a
  hardness result, showing that under a certain cryptographic assumption, no
  algorithm can learn kernel-based halfspaces in time polynomial in $L$.
 \end{abstract}

\section{Introduction}

A highly important hypothesis class in machine learning theory and applications
is that of halfspaces in a Reproducing Kernel Hilbert Space (RKHS). Choosing a 
halfspace based on empirical data is often performed using Support Vector 
Machines (SVMs) \cite{Vapnik98}.  SVMs replace the more natural 0-1 loss 
function with a convex surrogate -- the hinge-loss. By doing so, we can rely 
on convex optimization tools. However, there are no guarantees on how well the 
hinge-loss approximates the 0-1 loss function. There do exist some recent 
results on the \emph{asymptotic} relationship between surrogate convex loss 
functions and the 0-1 loss function \citep{Zhang04a,BartlettJoMc06}, but these 
do not come with finite-sample or finite-time guarantees. In this paper, we 
tackle the task of learning kernel-based halfspaces with respect to the 
non-convex 0-1 loss function. Our goal is to derive learning algorithms and to 
analyze them in the finite-sample finite-time setting.

 Following the standard statistical learning framework, we assume that there is an unknown distribution, $\D$, over the set of labeled examples, $\X \times \{0,1\}$, and our primary goal is to find a classifier, $h : \X \to \{0,1\}$, with low generalization error, 
\begin{equation} \label{eqn:def_err}
\err_\D(h) ~\eqdef 
\E_{(\x,y)\sim \D}[ |h(\x)-y|] ~.
\end{equation}
The learning algorithm is allowed to sample a training set of labeled examples,
$ (\x_1,y_1),\ldots,(\x_m,y_m)$, where each example is sampled i.i.d. from
$\D$, and it returns a classifier.  Following the agnostic PAC learning
framework \cite{KearnsScSe92}, we say that an algorithm
$(\epsilon,\delta)$-learns a concept class $H$ of classifiers using $m$
examples, if with probability of at least $1-\delta$ over a random choice of
$m$ examples the algorithm returns a classifier $\hat{h}$ that satisfies
\begin{equation} \label{eqn:aPAC}
\err_\D(\hat{h}) ~\le~ \inf_{h \in H} \err_\D(h) + \epsilon ~.
\end{equation}
We note that $\hat{h}$ does not necessarily belong to $H$. Namely, we
are concerned with \emph{improper} learning, which is as useful as
proper learning for the purpose of deriving good classifiers.  A
common learning paradigm is the Empirical Risk Minimization (ERM)
rule, which returns a classifier that minimizes the average error over
the training set,
\[
\hat{h} \in \argmin_{h \in H} \frac{1}{m} \sum_{i=1}^m |h(\x_i)-y_i| ~.
\]

The class of (origin centered) halfspaces is defined as follows. Let $\X$ be a
compact subset of a RKHS, which
w.l.o.g. will be taken to be the unit ball around the origin. Let $\phi_{0-1} :
\reals \to \reals$ be the function $\phi_{0-1}(a) = \indct{a \ge 0} = \thalf
(\sgn(a)+1)$. The class of halfspaces is the set of classifiers
\[
H_{\phi_{0-1}} ~\eqdef~ \{ \x \mapsto \phi_{0-1}(\inner{\w,\x}) \,:\, \w \in \X \} ~.
\]
Although we represent the halfspace using $\w \in \X$, which is a vector in the
RKHS whose dimensionality can be infinite, in practice we only need a function
that implements inner products in the RKHS (a.k.a. a kernel function), and one
can define $\w$ as the coefficients of a linear combination of examples in
our training set. To simplify the notation throughout the paper, we represent
$\w$ simply as a vector in the RKHS.

It is well known that if the dimensionality of $\X$ is $n$, then the VC
dimension of $H_{\phi_{0-1}}$ equals $n$. This implies that the number of
training examples required to obtain a guarantee of the form given in
\eqref{eqn:aPAC} for the class of halfspaces scales at least linearly with the
dimension $n$ \citep{Vapnik98}. Since kernel-based learning algorithms allow
$\X$ to be an infinite dimensional inner product space, we must use a different
class in order to obtain a guarantee of the form given in \eqref{eqn:aPAC}.

One way to define a slightly different concept class is to approximate the non-continuous function, $\phi_{0-1}$, with a Lipschitz continuous function, $\phi : \reals \to [0,1]$, which is often called a transfer function. For example, we can use a sigmoidal transfer function
\begin{equation} \label{eqn:sigdef}
\phi_{\sig}(a) ~\eqdef~ \frac{1}{1+ \exp(-4L\,a)} ~,
\end{equation}
which is a $L$-Lipschitz function. Other $L$-Lipschitz transfer functions are the erf function and the piece-wise linear function:
\begin{equation}\label{eq:erfpw}
\phi_{\erf}(a) ~\eqdef~ \thalf\left(1+\text{erf}\left(\sqrt{\pi} \,L\,a\right)\right)
~~~~,~~~~
\phi_{\pw}(a) ~\eqdef~ \max\left\{\min\left\{\tfrac{1}{2} + L\,a \,,\, 1\right\} \, 0\right\} 
\end{equation}
An illustration of these transfer functions is given in \figref{fig:erf}. 
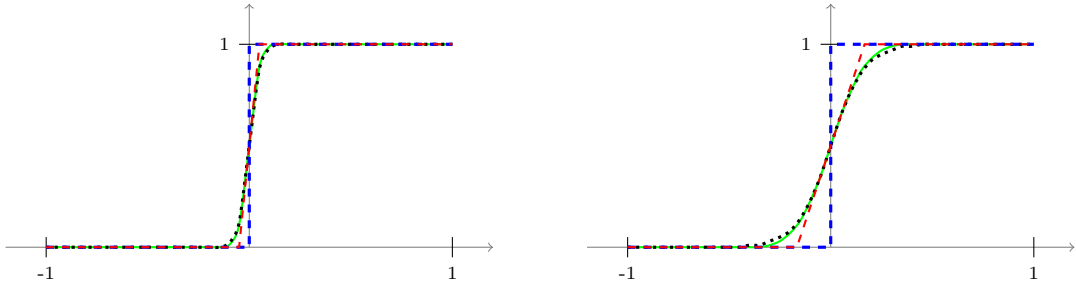
\begin{figure}[t]
\begin{center}
\begin{tikzpicture}[scale=2.7]

 \draw[->,gray] (-1.2,0) -- (1.2,0); 
 \draw[->,gray] (0,0) -- (0,1.2);

 \draw (-1,0.05) -- (-1,-0.05) node[below] {\scriptsize -1};
 \draw (1,0.05) -- (1,-0.05) node[below] {\scriptsize 1};
 \draw (0.05,1) -- (-0.05,1) node[left] {\scriptsize 1};

\draw[-, thick,green] plot[smooth] coordinates{(-1.00,0.00) (-0.95,0.00) (-0.90,0.00) (-0.85,0.00) (-0.80,0.00) (-0.75,0.00) (-0.70,0.00) (-0.65,0.00) (-0.60,0.00) (-0.55,0.00) (-0.50,0.00) (-0.45,0.00) (-0.40,0.00) (-0.35,0.00) (-0.30,0.00) (-0.25,0.00) (-0.20,0.00) (-0.15,0.00) (-0.10,0.01) (-0.05,0.11) (0.00,0.50) (0.05,0.89) (0.10,0.99) (0.15,1.00) (0.20,1.00) (0.25,1.00) (0.30,1.00) (0.35,1.00) (0.40,1.00) (0.45,1.00) (0.50,1.00) (0.55,1.00) (0.60,1.00) (0.65,1.00) (0.70,1.00) (0.75,1.00) (0.80,1.00) (0.85,1.00) (0.90,1.00) (0.95,1.00) (1.00,1.00) };

\draw[dotted,very thick,black] plot[smooth] coordinates{(-1.00,0.00) (-0.95,0.00) (-0.90,0.00) (-0.85,0.00) (-0.80,0.00) (-0.75,0.00) (-0.70,0.00) (-0.65,0.00) (-0.60,0.00) (-0.55,0.00) (-0.50,0.00) (-0.45,0.00) (-0.40,0.00) (-0.35,0.00) (-0.30,0.00) (-0.25,0.00) (-0.20,0.00) (-0.15,0.00) (-0.10,0.02) (-0.05,0.12) (0.00,0.50) (0.05,0.88) (0.10,0.98) (0.15,1.00) (0.20,1.00) (0.25,1.00) (0.30,1.00) (0.35,1.00) (0.40,1.00) (0.45,1.00) (0.50,1.00) (0.55,1.00) (0.60,1.00) (0.65,1.00) (0.70,1.00) (0.75,1.00) (0.80,1.00) (0.85,1.00) (0.90,1.00) (0.95,1.00) (1.00,1.00) };

\draw[dashed,blue,very thick] (-1,0) -- (0,0) -- (0,1) -- (1,1);

\draw[dashed,red,thick] (-1,0) -- (-0.05,0) --  (0.05,1) -- (1,1);

\end{tikzpicture}
\hspace{1cm}
\begin{tikzpicture}[scale=2.7]

 \draw[->,gray] (-1.2,0) -- (1.2,0); 
 \draw[->,gray] (0,0) -- (0,1.2);

 \draw (-1,0.05) -- (-1,-0.05) node[below] {\scriptsize -1};
 \draw (1,0.05) -- (1,-0.05) node[below] {\scriptsize 1};
 \draw (0.05,1) -- (-0.05,1) node[left] {\scriptsize 1};

\draw[-, thick,green] plot[smooth] coordinates{(-1.00,0.00) (-0.95,0.00) (-0.90,0.00) (-0.85,0.00) (-0.80,0.00) (-0.75,0.00) (-0.70,0.00) (-0.65,0.00) (-0.60,0.00) (-0.55,0.00) (-0.50,0.00) (-0.45,0.00) (-0.40,0.00) (-0.35,0.00) (-0.30,0.01) (-0.25,0.03) (-0.20,0.07) (-0.15,0.13) (-0.10,0.23) (-0.05,0.35) (0.00,0.50) (0.05,0.65) (0.10,0.77) (0.15,0.87) (0.20,0.93) (0.25,0.97) (0.30,0.99) (0.35,1.00) (0.40,1.00) (0.45,1.00) (0.50,1.00) (0.55,1.00) (0.60,1.00) (0.65,1.00) (0.70,1.00) (0.75,1.00) (0.80,1.00) (0.85,1.00) (0.90,1.00) (0.95,1.00) (1.00,1.00) };

\draw[dotted,very thick,black] plot[smooth] coordinates{(-1.00,0.00) (-0.95,0.00) (-0.90,0.00) (-0.85,0.00) (-0.80,0.00) (-0.75,0.00) (-0.70,0.00) (-0.65,0.00) (-0.60,0.00) (-0.55,0.00) (-0.50,0.00) (-0.45,0.00) (-0.40,0.01) (-0.35,0.01) (-0.30,0.03) (-0.25,0.05) (-0.20,0.08) (-0.15,0.14) (-0.10,0.23) (-0.05,0.35) (0.00,0.50) (0.05,0.65) (0.10,0.77) (0.15,0.86) (0.20,0.92) (0.25,0.95) (0.30,0.97) (0.35,0.99) (0.40,0.99) (0.45,1.00) (0.50,1.00) (0.55,1.00) (0.60,1.00) (0.65,1.00) (0.70,1.00) (0.75,1.00) (0.80,1.00) (0.85,1.00) (0.90,1.00) (0.95,1.00) (1.00,1.00) };

\draw[dashed,blue,very thick] (-1,0) -- (0,0) -- (0,1) -- (1,1);

\draw[dashed,red,thick] (-1,0) -- (-0.1667,0) --  (0.1667,1) -- (1,1);

\end{tikzpicture}
\end{center}
\caption{\footnotesize Illustrations of transfer functions for $L=10$
  (left) and $L=3$ (right): the 0-1 transfer function (dashed blue line); the sigmoid transfer function (dotted black line); the erf transfer function (green line); the piece-wise linear transfer function (dashed red line).} \label{fig:erf}
\end{figure}
Analogously to the definition of $H_{\phi_{0-1}}$, for a general
transfer function $\phi$ we define $H_\phi$ to be the set of
predictors $\x \mapsto \phi(\inner{\w,\x})$.  Since now the range of
$\phi$ is not $\{0,1\}$ but rather the entire interval $[0,1]$, we
interpret $\phi(\inner{\w,\x})$ as the probability to output the label
$1$.  The definition of $\err_\D(h)$ remains\footnote{
Note that in this case $\err_\D(h)$ can be interpreted as 
$\prob_{(\x,y) \sim \D, b \sim \phi(\inner{\w,\x})}[y \neq b]$.}  as in \eqref{eqn:def_err}. 

The advantage of using a Lipschitz transfer function can be seen via Rademacher generalization bounds \cite{BartlettMe02}.  In fact, a simple corollary of the contraction lemma implies the following:
\begin{theorem} \label{thm:Rad} 
Let $\epsilon,\delta \in (0,1)$ and let $\phi$ be an $L$-Lipschitz
transfer function. Let $m$ be an integer satisfying
\[
m ~\ge~ \left(\frac{2L + 3\sqrt{2\ln(8/\delta)}}{\epsilon} \right)^2 ~.
\]
Then, for any distribution $\D$ over $\X \times \{0,1\}$, the ERM
algorithm  $(\epsilon,\delta)$-learns the concept class $H_\phi$ using
$m$ examples. 
\end{theorem}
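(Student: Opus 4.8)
\medskip
\noindent\textbf{Proof plan.}
The plan is to prove a uniform convergence statement for the class $H_\phi$ and then apply the standard ERM comparison. Write $h_\w(\x)=\phi(\inner{\w,\x})$ and $\widehat{\err}_S(h)=\frac1m\sum_{i=1}^m|h(\x_i)-y_i|$ for the empirical error on a sample $S=((\x_1,y_1),\dots,(\x_m,y_m))$, and let $h^\star\in H_\phi$ (nearly) attain $\inf_{h\in H_\phi}\err_\D(h)$. If, with probability at least $1-\delta$ over $S$, both (i) $\err_\D(h_\w)\le\widehat{\err}_S(h_\w)+\alpha$ for \emph{every} $\w\in\X$ and (ii) $\widehat{\err}_S(h^\star)\le\err_\D(h^\star)+\beta$, then the ERM output $\hat h$ satisfies $\err_\D(\hat h)\le\widehat{\err}_S(\hat h)+\alpha\le\widehat{\err}_S(h^\star)+\alpha\le\err_\D(h^\star)+\alpha+\beta\le\inf_{h\in H_\phi}\err_\D(h)+\alpha+\beta$, so it suffices to make $\alpha+\beta\le\epsilon$. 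Item (ii) is just Hoeffding's inequality for the bounded variables $|h^\star(\x_i)-y_i|\in[0,1]$, giving $\beta=O(\sqrt{\ln(1/\delta)/m})$; the work is in item (i).

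For (i) I would invoke the standard Rademacher generalization bound: with the convention $\mathcal{R}_m(\mathcal{F})=\E_S\E_{\sigma}\sup_{f\in\mathcal{F}}\frac1m\sum_{i=1}^m\sigma_i f(z_i)$, one has, with probability at least $1-\delta$, $\E[f]\le\frac1m\sum_i f(z_i)+2\mathcal{R}_m(\mathcal{F})+O(\sqrt{\ln(1/\delta)/m})$ simultaneously over all $f$ in a class $\mathcal{F}$ of $[0,1]$-valued functions \citep{BartlettMe02}. Apply this to the induced loss class $\mathcal{L}_\phi=\{(\x,y)\mapsto|\phi(\inner{\w,\x})-y| : \w\in\X\}$. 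The key estimate is $\mathcal{R}_m(\mathcal{L}_\phi)\le L/\sqrt m$, obtained in two steps. First, for each fixed label $y\in\{0,1\}$ the real function $t\mapsto|\phi(t)-y|$ is $L$-Lipschitz, being the composition of the $1$-Lipschitz map $t\mapsto|t-y|$ with the $L$-Lipschitz transfer function $\phi$; hence the contraction lemma (Ledoux--Talagrand) gives $\mathcal{R}_m(\mathcal{L}_\phi)\le L\,\mathcal{R}_m(\mathcal{W})$, where $\mathcal{W}=\{\x\mapsto\inner{\w,\x} : \w\in\X\}$ is the class of linear functionals on the unit ball. Second, since $\X$ is the unit ball of the RKHS, a one-line computation gives
\[
\mathcal{R}_m(\mathcal{W})=\E_S\,\frac1m\,\E_{\sigma}\Bigl\|\sum_{i=1}^m\sigma_i\x_i\Bigr\|\le\E_S\,\frac1m\sqrt{\E_{\sigma}\Bigl\|\sum_{i=1}^m\sigma_i\x_i\Bigr\|^2}=\E_S\,\frac1m\sqrt{\sum_{i=1}^m\|\x_i\|^2}\le\frac1{\sqrt m},
\]
using Jensen's inequality together with the independence and zero mean of the Rademacher signs. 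This yields $\alpha=2L/\sqrt m+O(\sqrt{\ln(1/\delta)/m})$.

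Combining the two items and a union bound, the excess error of the ERM rule is at most $2L/\sqrt m+c\sqrt{\ln(c'/\delta)/m}$ with probability at least $1-\delta$; requiring this to be $\le\epsilon$ and tracking the numerical constants reproduces the displayed requirement $m\ge\bigl((2L+3\sqrt{2\ln(8/\delta)})/\epsilon\bigr)^2$. The argument is otherwise routine; the only point genuinely worth checking is that $t\mapsto|\phi(t)-y|$ need not vanish at $t=0$, so one must use the one-sided form of the contraction inequality, which holds for arbitrary Lipschitz maps with no normalization at the origin, rather than a version stated only for maps fixing $0$. Matching the precise constants $3$ and $8$ is the only slightly delicate bookkeeping, depending on how $\delta$ is split between (i) and (ii) and on whether one passes through the empirical or the expected Rademacher complexity in the intermediate step.
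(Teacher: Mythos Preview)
Your proposal is correct and follows exactly the route the paper intends: the paper does not give a detailed proof of \thmref{thm:Rad} but simply states it as ``a simple corollary of the contraction lemma'' together with the Rademacher generalization bounds of \citep{BartlettMe02}, which is precisely the combination you spell out (contraction applied to $t\mapsto|\phi(t)-y|$, then the standard $1/\sqrt m$ bound on the Rademacher complexity of linear functionals on the unit ball, then the usual ERM comparison).
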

The above theorem tells us that the sample complexity of learning $H_\phi$ is
$\tilde{\Omega}(L^2/\epsilon^2)$. Crucially, the sample complexity does not
depend on the dimensionality of $\X$, but only on the Lipschitz constant of the
transfer function. This allows us to learn with kernels, when the
dimensionality of $\X$ can even be infinite. A related analysis compares the
error rate of a halfspace $\w$ to the number of margin mistakes $\w$ makes on
the training set - see \secref{sec:margin} for a comparison.

From the computational complexity point of view, the result given in
\thmref{thm:Rad} is problematic, since the
ERM algorithm should solve the non-convex optimization problem
\begin{equation} \label{eqn:optERM}
\argmin_{\w: \|\w\| \le 1} ~\frac{1}{m} \sum_{i=1}^m |\phi(\inner{\w,\x_i})-y_i| ~.
\end{equation}
Solving this problem in polynomial time is hard under reasonable assumptions
(see \secref{sec:hardness} in which we present a formal hardness
result). Adapting a technique due to \cite{Ben-DavidSi00} we show in
\fullpaper{\appref{sec:ERMsolve}}\conferencepaper{the full version of this
  paper \cite{SSSS10}} that it is possible to find an $\epsilon$-accurate
solution to \eqref{eqn:optERM} (where the transfer function is $\phi_\pw$) in
time $\poly\left(\exp\left(\tfrac{L^2}{\epsilon^2}
    \log(\tfrac{L}{\epsilon})\right)\right)$.  The main contribution of this
paper is the derivation and analysis of a more simple learning algorithm that
$(\epsilon,\delta)$-learns the class $H_\sig$ using time and sample complexity
of at most $\poly\left(\exp\left(L\, \log(\tfrac{L}{\epsilon}
    )\right)\right)$. That is, the runtime of our algorithm is exponentially
smaller than the runtime required to solve the ERM problem using the technique
described in \cite{Ben-DavidSi00}. Moreover, the algorithm of
\cite{Ben-DavidSi00} performs an exhaustive search over all $(L/\epsilon)^2$
subsets of the $m$ examples in the training set, and therefore its runtime is
always order of $m^{L^2/\epsilon^2}$. In contrast, our algorithm's runtime
depends on a parameter $B$, which is bounded by $\exp(L)$ only under a
worst-case assumption. Depending on the underlying distribution, $B$ can be
much smaller than the worst-case bound. In practice, we will cross-validate for
$B$, and therefore the worst-case bound will often be pessimistic.

The rest of the paper is organized as follows. In \secref{sec:main} we describe
our main results. Next, in \secref{sec:hardness} we provide a hardness result,
showing that it is not likely that there exists an algorithm that learns
$H_\sig$ or $H_\pw$ in time polynomial in $L$. We outline additional related work in
\secref{sec:related}. In particular, the relation between our approach and
margin-based analysis is described in \secref{sec:margin}, and the relation to
approaches utilizing a distributional assumption is discussed in
\secref{sec:kalai}. We wrap up with a discussion in \secref{sec:discussion}.

\section{Main Results} \label{sec:main}

In this section we present our main result. Recall that we would like
to derive an algorithm which learns the class $H_\sig$. However, the
ERM optimization problem associated with $H_\sig$ is non-convex. The
main idea behind our construction is to learn a larger hypothesis
class, denoted $H_B$, which approximately contains $H_\sig$, and for which the ERM
optimization problem becomes convex. The price we need to pay is that
from the statistical point of view, it is more difficult to learn the
class $H_B$ than the class $H_\sig$, therefore the sample complexity
increases.  




The class $H_B$ we use is a class of \emph{linear} predictors in some
other RKHS. The kernel function that implements the inner product in
the newly constructed RKHS is
\begin{equation} \label{eqn:kerneldef}
K(\x,\x') ~\eqdef~ \frac{1}{1 - \nu \inner{\x,\x'}} ~,
\end{equation}
where $\nu \in (0,1)$ is a parameter and $\inner{\x,\x'}$ is the inner product
in the original RKHS. As mentioned previously, $\inner{\x,\x'}$ is usually
implemented by some kernel function $K'(\z,\z')$, where $\z$ and $\z'$ are the
pre-images of $\x$ and $\x'$ with respect to the feature mapping induced by
$K'$. Therefore, the kernel in \eqref{eqn:kerneldef} is simply a
composition with $K'$, i.e.  $K(\z,\z')=1/(1-\nu K'(\z,\z'))$. 

To simplify the presentation we will set $\nu = 1/2$, although in practice
other choices might be more effective. It is easy to verify that $K$ is a valid
positive definite kernel function (see for example
\cite{ScholkopfSm02,CristianiniSh04}).  Therefore, there exists some mapping
$\psi : \X \to \V$, where $\V$ is an RKHS with $\inner{\psi(\x),\psi(\x')} =
K(\x,\x')$. The class $H_B$ is defined to be:
\begin{equation} \label{eqn:HBdef}
H_B ~\eqdef~ \{ \x \mapsto \inner{\v,\psi(\x)} \, : \, \v \in \V,~\|\v\|^2 \le B \} ~.
\end{equation}

The main result we prove in this section is the following:

\begin{theorem}\label{thm:mainres}
Let $\epsilon,\delta \in (0,1)$ and let $L \ge 3$. Let $B = 2L^4+\exp\left(7L 
\log\left(\tfrac{2L}{\epsilon}\right)+3\right)$ and let $m$ be a sample size 
that satisfies $m \ge \frac{8B}{\epsilon^2} \, 
\left(2+9\sqrt{\ln(8/\delta)}\right)^2 $. Then, for any distribution $\D$, 
with probability of at least $1-\delta$, any ERM predictor $\hat{h} \in H_B$ 
with respect to $H_B$ satisfies
\[
\err_\D(\hat{h}) \le \min_{h \in H_\sig} \err_\D(h_\sig) + \epsilon ~.
\]
\end{theorem}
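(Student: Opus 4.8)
The plan is to decompose the error of the ERM predictor $\hat h \in H_B$ into an approximation term and an estimation term. The estimation term is handled exactly as in \thmref{thm:Rad}: the class $H_B$ consists of linear predictors of norm at most $\sqrt{B}$ in the RKHS $\V$, and since $K(\x,\x) = 1/(1-\nu\|\x\|^2) \le 1/(1-\nu) = 2$ for $\|\x\|\le 1$ and $\nu=1/2$, the Rademacher complexity of $H_B$ on a sample of size $m$ is $O(\sqrt{B/m})$. A standard uniform-convergence argument (the contraction lemma is not even needed here since the loss $|\cdot - y|$ is $1$-Lipschitz and we are working with linear predictors directly) then gives that with probability at least $1-\delta$, simultaneously for all $h \in H_B$, $|\err_\D(h) - \widehat{\err}(h)| \le \epsilon/3$ once $m \ge \frac{8B}{\epsilon^2}(2 + 9\sqrt{\ln(8/\delta)})^2$; this pins down the stated sample size. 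Consequently $\err_\D(\hat h) \le \min_{h\in H_B}\err_\D(h) + 2\epsilon/3$.

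The heart of the argument is the approximation step: I must show that $\min_{h \in H_B}\err_\D(h) \le \min_{h\in H_\sig}\err_\D(h) + \epsilon/3$. For this it suffices to show that for every $\w$ in the unit ball of the original RKHS, there exists $\v\in\V$ with $\|\v\|^2 \le B$ such that $\inner{\v,\psi(\x)}$ is uniformly close (in sup norm over the unit ball) to $\phi_\sig(\inner{\w,\x})$ — or at least close enough in expectation under $\D$ — so that the zero-one errors differ by at most $\epsilon/3$. The key observation is that the kernel $K(\x,\x') = 1/(1-\nu\inner{\x,\x'}) = \sum_{n\ge 0}\nu^n \inner{\x,\x'}^n$ has as its feature space the direct sum of all tensor powers of the original RKHS (weighted by $\nu^{n/2}$), so a linear predictor $\inner{\v,\psi(\x)}$ over $\V$ can represent any power series $\sum_n a_n \inner{\w,\x}^n$ in the linear functional $\inner{\w,\x}$, with squared norm $\|\v\|^2 = \sum_n a_n^2/\nu^n = \sum_n a_n^2 2^n$. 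Thus the task reduces to a one-dimensional polynomial/power-series approximation problem: approximate the scalar function $t \mapsto \phi_\sig(t) = 1/(1+\exp(-4Lt))$ on $[-1,1]$ by a power series $p(t) = \sum_n a_n t^n$ with small sup-norm error $\epsilon/3$ while controlling $\sum_n a_n^2 2^n$, and this controlled quantity is exactly what must be bounded by $B$.

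The main obstacle, therefore, is the quantitative polynomial approximation of the sigmoid together with the norm bound. I would proceed by first truncating: replace $\phi_\sig$ by its degree-$d$ Chebyshev (or Taylor) partial sum, where $d = O(L\log(L/\epsilon))$ suffices because $\phi_\sig$ is analytic in a strip whose width is $\Theta(1/L)$, giving geometric decay of the approximation error at rate $(1-c/L)^d$; choosing $d \approx 7L\log(2L/\epsilon)$ drives the error below $\epsilon/3$. Then I would bound the coefficients: for a polynomial of degree $d$ that is bounded by $1$ on $[-1,1]$, the monomial coefficients are at most $\sim 2^{O(d)}$ by Chebyshev-type coefficient bounds (e.g. via the fact that Chebyshev polynomials $T_n$ have $\ell_1$-coefficient-norm at most $\sqrt 2\,(1+\sqrt2)^{n-1}$, or a cruder Markov-brothers/Cauchy-estimate bound), so $\sum_n a_n^2 2^n \le (\max_n|a_n|)^2 \cdot 2^{d+1}\cdot d \le \exp(O(d))$; plugging in $d = 7L\log(2L/\epsilon)$ and tracking constants yields the claimed $B = 2L^4 + \exp(7L\log(2L/\epsilon) + 3)$, the leading $2L^4$ presumably absorbing the low-order terms or the piecewise-linear/erf variant. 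Finally, combining the approximation bound ($\epsilon/3$ in sup norm, hence at most $\epsilon/3$ in zero-one error by the interpretation of $\phi$-valued predictors as randomized labels) with the uniform-convergence bound ($2\epsilon/3$) gives $\err_\D(\hat h) \le \min_{h\in H_\sig}\err_\D(h) + \epsilon$, completing the proof. I expect the bookkeeping of constants in the coefficient bound — making sure the exponent is genuinely $7L\log(2L/\epsilon)+3$ and not merely $O(L\log(L/\epsilon))$ — to be the fiddliest part.
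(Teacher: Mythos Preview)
Your proposal is correct and follows essentially the same route as the paper: a Rademacher/uniform-convergence bound for the linear class $H_B$ (the paper's \thmref{thm:HBsample}), the identification of $H_B$ with power-series predictors whose coefficients satisfy $\sum_n a_n^2 2^n \le B$ (the paper's \lemref{lem:PB}), and a Chebyshev approximation of $\phi_\sig$ exploiting its analyticity in a strip of width $\Theta(1/L)$ to pin down $B$ (the paper's \lemref{lem:sig}). Two small bookkeeping remarks: the stated sample size corresponds to an $\epsilon/2$--$\epsilon/2$ split rather than your $2\epsilon/3$--$\epsilon/3$, and the $2L^4$ term indeed arises from the low-order coefficients $\beta_0,\beta_1$; the precise constant $7L\log(2L/\epsilon)+3$ in the exponent comes from a careful complex-analytic bound on the Chebyshev coefficients $\alpha_n$ (via the poles of $\phi_\sig$) combined with explicit bounds on the monomial coefficients of $T_n$, which is exactly the ``fiddliest part'' you anticipated.
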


We note that the bound on $B$ is far from being the tightest possible in terms
of constants and second-order terms. Also, the assumption of $L\geq 3$ is
rather arbitrary, and is meant to simplify the presentation of the bound.

To prove this theorem, we start with analyzing the time and sample complexity
of learning $H_B$. The sample complexity analysis follows directly from a
Rademacher generalization bound \cite{BartlettMe02}. In particular, the
following theorem tells us that the sample complexity of learning $H_B$ with
the ERM rule is order of $B/\epsilon^2$ examples.
\begin{theorem} \label{thm:HBsample}
Let $\epsilon,\delta \in (0,1)$, let $B \ge 1$, and let $m$ be a sample size that satisfies
\[
m ~\ge~ \frac{2B}{\epsilon^2} \,
\left(2+9\sqrt{\ln(8/\delta)}\right)^2 ~.
\]
Then, for any distribution $\D$, the ERM algorithm
$(\epsilon,\delta)$-learns $H_B$. 
\end{theorem}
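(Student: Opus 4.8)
The plan is to reduce the statement to a single uniform convergence estimate and then feed it into the standard Rademacher machinery. Let $\hat{h}$ be any ERM predictor over $H_B$, let $h^\star \in \argmin_{h\in H_B}\err_\D(h)$, and write $\hL(h) = \frac1m\sum_{i=1}^m|h(\x_i)-y_i|$ for the empirical error. The usual decomposition
\[
\err_\D(\hat h) - \err_\D(h^\star) ~\le~ \bigl(\err_\D(\hat h) - \hL(\hat h)\bigr) + \bigl(\hL(\hat h) - \hL(h^\star)\bigr) + \bigl(\hL(h^\star) - \err_\D(h^\star)\bigr),
\]
together with $\hL(\hat h) = \min_{h\in H_B}\hL(h) \le \hL(h^\star)$, shows that it suffices to prove that with probability at least $1-\delta$ we have $\sup_{h\in H_B}|\err_\D(h)-\hL(h)| \le \epsilon/2$.

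The core estimate is a bound on $\|\psi(\x)\|$. Since $\|\x\|\le 1$ and $\nu=1/2$, we get $\|\psi(\x)\|^2 = K(\x,\x) = 1/(1-\nu\|\x\|^2) \le 1/(1-\nu) = 2$ for every $\x\in\X$; in particular the series defining $K$ converges. Hence every predictor in $H_B$ is bounded, $|\inner{\v,\psi(\x)}| \le \|\v\|\,\|\psi(\x)\| \le \sqrt{2B}$, so the loss $(\x,y)\mapsto|\inner{\v,\psi(\x)}-y|$ takes values in $[0,\,1+\sqrt{2B}]$. The standard Rademacher bound for linear functionals in an RKHS then gives, for any sample of size $m$, an empirical Rademacher complexity $\hat{\mathcal R}_m(H_B) = \frac{\sqrt B}{m}\,\E_{\bsig}\bigl\|\sum_i\sigma_i\psi(\x_i)\bigr\| \le \frac{\sqrt B}{m}\sqrt{\sum_i\|\psi(\x_i)\|^2} \le \sqrt{\tfrac{2B}{m}}$. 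By the contraction lemma (using that $a\mapsto|a-y|$ is $1$-Lipschitz), the loss class has Rademacher complexity at most $\sqrt{2B/m}$ as well.

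Now I would invoke the standard Rademacher-based generalization bound \cite{BartlettMe02}: rescaling the loss to $[0,1]$ and back, and using the bounded-differences inequality with range $1+\sqrt{2B}$, yields a bound of the form
\[
\sup_{h\in H_B}|\err_\D(h)-\hL(h)| ~\le~ 2\sqrt{\tfrac{2B}{m}} + c\,(1+\sqrt{2B})\sqrt{\tfrac{\ln(1/\delta)}{m}}
\]
for a universal constant $c$. Every term is $O(\sqrt{B/m})$ up to the $\sqrt{\ln(1/\delta)}$ factor, so substituting $m \ge \frac{2B}{\epsilon^2}(2+9\sqrt{\ln(8/\delta)})^2$ drives the right-hand side below $\epsilon/2$; carrying the constants of the Bartlett--Mendelson bound through the two-sided union bound produces exactly the stated form with $8/\delta$ inside the logarithm, and combined with the first paragraph this proves the theorem.

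There is no genuine obstacle here once the kernel norm is bounded; the argument is essentially routine. The only point needing a little care is that $H_B$ does not output values in $[0,1]$, so a priori the loss range is $\Theta(\sqrt B)$ rather than $1$. This is harmless because the prescribed $m$ already carries a factor of $B$, which absorbs the $\sqrt B$ in the concentration term; alternatively one may compose every predictor with the clipping map $a\mapsto\max\{0,\min\{1,a\}\}$, which is $1$-Lipschitz (so it does not increase the Rademacher complexity) and, since $y\in\{0,1\}$, only decreases $|h(\x)-y|$, thereby restoring a loss range of $1$ at no cost.
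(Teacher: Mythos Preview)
Your proposal is correct and follows essentially the same route as the paper's own proof: bound $K(\x,\x)\le 2$ to get a Rademacher complexity of $\sqrt{2B/m}$ for $H_B$, bound the loss by $\sqrt{2B}+1$ via Cauchy--Schwarz, and invoke the Bartlett--Mendelson Rademacher bound. The paper simply cites these steps, whereas you spell out the ERM decomposition and add the (correct and useful) remark about clipping to $[0,1]$ to tame the loss range, but there is no substantive difference in strategy.
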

\begin{proof}
Since $K(\x,\x) \le 2$, the Rademacher complexity of $H_B$ is bounded
by $\sqrt{2 B/m}$ (see also \cite{KakadeSrTe08}). Additionally, using
Cauchy-Schwartz inequality we have that the loss is bounded, $|\inner{\v,\psi(\x)}-y| \le
\sqrt{2 B} + 1$. The result now follows directly from
\cite{BartlettMe02,KakadeSrTe08}. 
\end{proof}

Next, we show that the ERM problem with respect to $H_B$ can be solved
in time $\poly(m)$. The ERM problem associated with $H_B$ is
\[
\min_{\v : \|\v\|^2 \le B} \frac{1}{m} \sum_{i=1}^m |
\inner{\v,\psi(\x_i)} - y_i| ~.
\]
Since the objective function is defined only via
inner products with $\psi(\x_i)$, and the constraint on $\v$ is
defined by the $\ell_2$-norm, it follows by the Representer theorem
\cite{Wahba90} that there is an optimal solution $\v^\star$ that can
be written as $\v^\star = \sum_{i=1}^m \alpha_i
\psi(\x_i)$. Therefore, instead of optimizing over $\v$, we
can optimize over the set of weights $\alpha_1,\ldots,\alpha_m$ by solving the
equivalent optimization problem
\[
\min_{\alpha_1,\ldots,\alpha_m} \frac{1}{m} \sum_{i=1}^m \left|
\sum_{j=1}^m \alpha_j K(\x_j,\x_i) - y_i \right|  ~~~\textrm{s.t.}~~~
\sum_{i,j = 1}^m \alpha_i \alpha_j K(\x_i,\x_j) \le B~.
\]
This is a convex optimization problem in $\reals^m$ and therefore 
can be solved in time $\poly(m)$ using standard optimization tools.\footnote{ 
In fact, using stochastic gradient
descent, we can $(\epsilon,\delta)$-learn $H_B$ in time $O(m^2)$,
where $m$ is as defined in \thmref{thm:HBsample} ---See for example \cite{BottouBo08,ShalevSr08}.}
We therefore obtain:
\begin{corollary}
Let $\epsilon,\delta \in (0,1)$ and let $B \ge 1$. Then, for any
distribution $\D$, it is possible to $(\epsilon,\delta)$-learn $H_B$
in sample and time complexity of $\poly\left(\tfrac{B}{\epsilon} \,\log(1/\delta)\right)$. 
\end{corollary}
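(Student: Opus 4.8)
The plan is to combine the statistical guarantee of \thmref{thm:HBsample} with the computational observation, already sketched in the paragraph preceding the corollary, that the ERM problem for $H_B$ reduces to a convex program in $\reals^m$. First I would fix the sample size $m$ to be the smallest integer satisfying the bound of \thmref{thm:HBsample} for the given $\epsilon$ and $\delta$; this is $O\!\left(\tfrac{B}{\epsilon^2}\ln(1/\delta)\right)$, which in particular is $\poly\!\left(\tfrac{B}{\epsilon}\log(1/\delta)\right)$. By that theorem, with probability at least $1-\delta$ any exact ERM predictor $\hat h \in H_B$ on such a sample satisfies $\err_\D(\hat h)\le \inf_{h\in H_B}\err_\D(h)+\epsilon$.

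Second I would turn the ERM computation into something that runs in polynomial time. By the Representer theorem an optimal $\v^\star$ lies in $\lspan\{\psi(\x_1),\dots,\psi(\x_m)\}$, so we may replace the optimization over $\v\in\V$ by optimization over the coefficient vector $(\alpha_1,\dots,\alpha_m)\in\reals^m$ in the equivalent program displayed just above the corollary. Its objective is a convex (piecewise-linear) function of $\alpha$, and the constraint $\sum_{i,j}\alpha_i\alpha_j K(\x_i,\x_j)\le B$ is convex because the Gram matrix of $K$ is positive semidefinite; moreover each entry $K(\x_i,\x_j)$ is computable and bounded (since $K(\x,\x)\le 2$). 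Hence the whole problem is a convex program in $m$ variables, which standard methods --- e.g. interior-point methods, or the stochastic gradient approach noted in the footnote, which needs only $O(m^2)$ operations --- solve to accuracy $\epsilon$ in time $\poly(m)=\poly\!\left(\tfrac{B}{\epsilon}\log(1/\delta)\right)$.

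Third I would close the small gap between ``exact ERM'' and what the solver actually returns, namely an $\epsilon'$-approximate empirical minimizer. I would run the solver with target accuracy $\epsilon'=\epsilon/2$ (and, for a randomized solver, amplify its confidence to $1-\delta/2$ by repetition together with selection on a held-out sample, or simply via an averaging/Markov argument), and double the sample size so that the generalization bound of \thmref{thm:HBsample} holds at level $\epsilon/2$. A triangle-inequality argument then yields $\err_\D(\hat h)\le \inf_{h\in H_B}\err_\D(h)+\epsilon$ with probability $1-\delta$. Since the sample size, the number of optimization steps, and the per-step cost are all polynomial in $B/\epsilon$ and $\log(1/\delta)$, the stated bound follows.

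I expect the only real subtlety to be the bookkeeping around approximate optimization and confidence amplification for a randomized solver; the core reduction --- Representer theorem, hence a convex program in $\reals^m$, hence $\poly(m)$ running time, combined with the $O(B/\epsilon^2)$ sample complexity of \thmref{thm:HBsample} --- is essentially immediate from the material already developed.
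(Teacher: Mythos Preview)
Your proposal is correct and follows essentially the same route as the paper: the corollary in the paper is not given a separate proof but simply follows from the preceding discussion (Representer theorem $\Rightarrow$ convex program in $\reals^m$ $\Rightarrow$ $\poly(m)$ runtime, combined with the $O(B/\epsilon^2\cdot\log(1/\delta))$ sample bound of \thmref{thm:HBsample}). Your third step, handling the discrepancy between an exact ERM and an $\epsilon'$-approximate solver output, is a nicety the paper glosses over; it is fine to include but not strictly needed to match the paper's level of rigor here.
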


It is left to understand  why the class $H_B$ approximately contains
the class $H_\sig$. 
Recall that for any transfer function, $\phi$, we define the class
$H_\phi$ to be all the predictors of the form $\x \mapsto
\phi(\inner{\w,\x})$. The first step is to show that $H_B$ contains
the union of $H_\phi$ over all polynomial transfer functions that satisfy a certain boundedness
condition on their coefficients. 
\begin{lemma} \label{lem:PB}
Let $P_B$ be the following set
of polynomials (possibly with infinite degree)
\begin{equation} \label{eqn:PB}
P_{B} \eqdef \left\{ p(a) =  \sum_{j=0}^\infty \beta_j \,a^j \,:\,
 \sum_{j=0}^\infty \beta_j^2 \, 2^{j} \le B \right\} ~.
\end{equation}
Then,
\[
\bigcup_{p \in P_B} H_p ~\subset~ H_B ~.
\]
\end{lemma}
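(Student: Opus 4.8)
The plan is to realize, for each polynomial $p \in P_B$ and each $\w \in \X$, the predictor $\x \mapsto p(\inner{\w,\x})$ as a single linear functional $\x\mapsto\inner{\v,\psi(\x)}$ with $\|\v\|^2 \le B$. Since $p$ and $\w$ are arbitrary, this immediately gives $H_p \subset H_B$ for every $p\in P_B$, and hence $\bigcup_{p\in P_B} H_p \subset H_B$. Because any two RKHS realizations of the kernel $K$ are isometrically isomorphic and the classes $H_B$, $H_p$ depend only on function values, we are free to use a convenient concrete $\psi$.

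First I would write down an explicit feature map for $K$. Expanding the geometric series (valid since $|\inner{\x,\x'}|\le 1$), $K(\x,\x') = \tfrac{1}{1-\frac12\inner{\x,\x'}} = \sum_{j=0}^\infty 2^{-j}\inner{\x,\x'}^j$, and using $\inner{\x,\x'}^j = \inner{\x^{\otimes j},\x'^{\otimes j}}$ for the $j$-fold tensor power (with the convention $\x^{\otimes 0}=1\in\reals$), the map $\psi(\x) = \bigl(2^{-j/2}\,\x^{\otimes j}\bigr)_{j\ge 0}$, taking values in the Hilbert-space direct sum $\bigoplus_{j\ge 0}\X^{\otimes j}$, satisfies $\inner{\psi(\x),\psi(\x')} = K(\x,\x')$. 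Next, given $p(a)=\sum_j \beta_j a^j$ with $\sum_j \beta_j^2 2^j \le B$ and $\|\w\|\le 1$, I would set $\v = \bigl(\beta_j 2^{j/2}\,\w^{\otimes j}\bigr)_{j\ge 0}$. A direct computation then gives $\inner{\v,\psi(\x)} = \sum_j \beta_j 2^{j/2}2^{-j/2}\inner{\w^{\otimes j},\x^{\otimes j}} = \sum_j \beta_j \inner{\w,\x}^j = p(\inner{\w,\x})$, while $\|\v\|^2 = \sum_j \beta_j^2 2^j\|\w\|^{2j} \le \sum_j \beta_j^2 2^j \le B$, using $\|\w\|\le 1$; so $\v$ witnesses membership of $\x\mapsto p(\inner{\w,\x})$ in $H_B$.

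I expect the only points needing care to be bookkeeping rather than genuine obstacles: (i) that $\v$ is a bona fide element of $\V$, which holds since $\|\v\|^2<\infty$; (ii) that the defining series of $p$ converges on $[-1,1]$ so that $H_p$ is well defined — by Cauchy–Schwarz $\sum_j |\beta_j| \le \bigl(\sum_j \beta_j^2 2^j\bigr)^{1/2}\bigl(\sum_j 2^{-j}\bigr)^{1/2} \le \sqrt{2B}$; and (iii) keeping a single consistent convention (full tensor powers versus symmetric tensor powers — either works, as long as the same choice is used for $\psi$ and for $\v$, and the induced-norm identity $\|\w^{\otimes j}\|=\|\w\|^j$ holds). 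The two inner-product expansions and the norm identity are then immediate, completing the proof.
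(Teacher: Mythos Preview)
Your proof is correct and is essentially the same construction as the paper's: the paper writes the feature map in explicit multi-index coordinates $(k_1,\ldots,k_j)\mapsto 2^{-j/2}x_{k_1}\cdots x_{k_j}$, which is exactly your $2^{-j/2}\x^{\otimes j}$ in coordinate form, and then defines $\v$ with coordinates $\beta_j 2^{j/2}w_{k_1}\cdots w_{k_j}$, i.e., your $\beta_j 2^{j/2}\w^{\otimes j}$. Your tensor formulation is slightly cleaner in that it handles infinite-dimensional $\X$ directly, whereas the paper first treats the finite-dimensional case and then appeals to the existence of a countable basis.
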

\begin{proof}
To simplify the proof, we first assume that $\X$ is simply the unit
ball in $\reals^n$, for an arbitrarily large but finite $n$. Consider the mapping $\psi:\X \rightarrow \reals^{\mathbb{N}}$ defined as follows: for any $\x \in \X$, we let $\psi(\x)$ be an infinite vector, indexed by $k_{1}\ldots,k_{j}$ for all $(k_{1},\ldots,k_{j})\in \{1,\ldots,n\}^j$ and $j=0\ldots\infty$, where the entry at index $k_{1}\ldots,k_{j}$ equals
$
2^{-j/2} x_{k_{1}}\cdot x_{k_{2}}\cdots x_{k_{j}} $.
The inner-product between $\psi(\x)$ and $\psi(\x')$ for any $\x,\x'
\in \X$ can be calculated as follows, 
\begin{align*}
\inner{\psi(\x),\psi(\x')}& ~=~ \sum_{j=0}^{\infty}\sum_{(k_{1},\ldots,k_{j})\in \{1,\ldots,n\}^j}2^{-j}x_{k_1}x'_{k_1}\cdots x_{k_j}x'_{k_j}
~=~ \sum_{j=0}^{\infty}2^{-j}(\inner{\x,\x'})^j 
~=~ \frac{1}{1- \thalf \inner{\x,\x'}}.
\end{align*}
This is exactly the kernel function defined in \eqref{eqn:kerneldef}
(recall that we set $\nu = 1/2$) and therefore $\psi$ maps to the RKHS defined by $K$. 
Consider any polynomial $p(a)=\sum_{j=0}^{\infty} \beta_j a^j$ in
$P_{B}$, and any $\w \in \X$. Let $\v_{\w}$ be an element in
$\reals^{\mathbb{N}}$ explicitly defined as being equal to $\beta_j
2^{j/2} w_{k_1}\cdots w_{k_j}$ at index $k_1,\ldots,k_j$ (for all
$k_1,\ldots,k_j \in \{1,\ldots,n\}^j,j=0\ldots \infty$). 
By definition of $\psi$ and $\v_{\w}$, we have that
\begin{align*}
\inner{\v_{\w},\psi(\x)} & = \sum_{j=0}^{\infty}\sum_{k_1,\ldots,k_j}2^{-j/2}\beta_j 2^{j/2}w_{k_1}\cdots w_{k_j}x_{k_{1}}\cdot \cdots x_{k_{j}}
 = \sum_{j=0}^{\infty}\beta_j(\inner{\w,\x})^j 
= p(\inner{\w,\x})~.
\end{align*}
In addition,
\begin{align*}
\|\v_{\w}\|^2 & =  \sum_{j=0}^\infty \sum_{k_1,\ldots,k_j}\beta_j^2 2^{j}  w_{k_1}^2\cdots w_{k_j}^2
= \sum_{j=0}^\infty \beta_j^2 2^{j} \sum_{k_1}w_{k_1}^2\sum_{k_2}w_{k_2}^2\cdots \sum_{k_j}w_{k_j}^2
 = \sum_{j=0}^\infty \beta_j^2 2^{j} \left(\|\w\|^2\right)^j \leq B.
\end{align*}
Thus, the predictor $\x \mapsto \inner{\v_\w,\psi(\x)}$ belongs to $H_B$ and is
the same as the predictor $\x \mapsto p(\inner{\w,\x})$. This proves that $H_p
\subset H_B$ for all $p \in P_B$ as required. Finally, if $\X$ is an infinite
dimensional RKHS, the only technicality is that in order to represent $\x$ as a
(possibly infinite) vector, we need to show that our RKHS has a countable
basis. This holds since the inner product $\inner{\x,\x'}$ over $\X$ is
continuous and bounded (see \citep{Berlinet03}).
\end{proof}

Finally, the following lemma states that with a sufficiently large
$B$, there exists a polynomial in $P_B$ which approximately equals to
$\phi_\sig$. This implies that $H_B$ approximately contains $H_\sig$.  
\begin{lemma}\label{lem:sig}
Let $\phi_{\sig}$ be as defined in \eqref{eqn:sigdef}, where for simplicity we 
assume $L \geq 3$. For any $\epsilon>0$, let
\[
B = 2L^4+\exp\left(7L \log\left(\tfrac{2L}{\epsilon}\right)+3\right).
\]
Then there exists $p\in P_{B}$ such that
\[
\forall \x,\w \in \X,~~|p(\inner{\w,\x}) - \phi_\sig(\inner{\w,\x})|
\le \epsilon ~.
\]
\end{lemma}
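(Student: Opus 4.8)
The plan is to reduce the claim to one-dimensional polynomial approximation. Since $\w$ and $\x$ lie in the unit ball of the RKHS, $\inner{\w,\x}$ ranges over $[-1,1]$, so it suffices to produce a (finite-degree) polynomial $p(a)=\sum_j\beta_j a^j$ with $\sup_{a\in[-1,1]}|p(a)-\phi_\sig(a)|\le\epsilon$ and $\sum_j\beta_j^2\,2^j\le B$; the latter is exactly the condition for $p\in P_B$. A pure Taylor truncation will not do, since $\phi_\sig$ has singularities at distance only $\pi/(4L)<1$ from the origin, so its Taylor series diverges on parts of $[-1,1]$; one must use a polynomial adapted to the interval, e.g.\ a Chebyshev truncation.

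First I would record an analyticity fact. The function $a\mapsto 1/(1+e^{-4La})$ is meromorphic on $\mathbb{C}$ with poles exactly where $e^{-4La}=-1$, i.e.\ at the points $\pm i(2k+1)\pi/(4L)$, $k\ge0$, all on the imaginary axis and all at distance $\ge\pi/(4L)$ from $\mathbb{R}$. Fixing a constant $c\in(1/2,1)$, a short optimization (minimize $|1+te^{-i\beta}|$ over $t>0$ and $|\beta|\le c\pi$) shows $|1+e^{-4La}|\ge\sin(c\pi)$ on the strip $\{|\mathrm{Im}(a)|\le c\pi/(4L)\}$; hence $\phi_\sig$ is analytic there and bounded by the \emph{absolute} constant $M:=1/\sin(c\pi)$, independent of $L$. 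In particular $\phi_\sig$ is analytic and bounded by $M$ on every Bernstein ellipse $E_\rho$ (foci $\pm1$) contained in this strip, namely for $\rho=e^t$ with $\sinh t=c\pi/(4L)$; here $\log\rho=t$ is of order $1/L$ (with an explicit constant for $L\ge3$) and $\rho-1\ge t$.

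The second step is the classical Chebyshev-truncation estimate for functions analytic on a Bernstein ellipse: the degree-$d$ truncation $p_d=\sum_{k=0}^d c_kT_k$ of the Chebyshev expansion of $\phi_\sig$ obeys $\sup_{[-1,1]}|\phi_\sig-p_d|\le\frac{2M}{\rho-1}\,\rho^{-d}=O(L)\cdot\rho^{-d}$. Because $\log\rho=\Theta(1/L)$, a degree $d=\Theta\!\big(L\log(L/\epsilon)\big)$ drives the error below $\epsilon$; I would fix the explicit such $d$. The third step bounds the power-basis coefficients of $p_d$: from $|c_k|\le 2M$ (as $|T_k|\le1$ on $[-1,1]$) and $\sum_j|t_{k,j}|\le(1+\sqrt2)^k$ for the coefficients $t_{k,j}$ of $T_k$ (an easy induction on $T_{k+1}=2xT_k-T_{k-1}$, whose characteristic root is $1+\sqrt2$), one gets $\sum_j|\beta_j|\le 2M\sum_{k\le d}(1+\sqrt2)^k=O\!\big(M(1+\sqrt2)^d\big)$, whence
\[
\sum_j\beta_j^2\,2^j\;\le\;2^d\Big(\sum_j|\beta_j|\Big)^2\;=\;O\!\Big(M^2\,(6+4\sqrt2)^d\Big)\;=\;\exp\!\big(\Theta(L\log(L/\epsilon))\big).
\]
Plugging in the degree $d$ from step two yields an explicit bound, which one checks is at most $2L^4+\exp\!\big(7L\log(2L/\epsilon)+3\big)$.

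The main obstacle is the tension between steps two and three: the approximation error is $\rho^{-d}=e^{-\Theta(d/L)}$, which wants $d$ large, while $P_B$-membership forces $\sum_j\beta_j^2\,2^j=e^{\Theta(d)}$ to stay below $B$, which wants $d$ small. These are reconcilable only because the two exponents differ by the factor $L$, and turning this into the \emph{specific} bound on $B$ requires a careful accounting of constants --- most importantly the observation that $M$ can be made an absolute constant (via a suitable choice of $c<1$), since any hidden factor of $L$ in $M$ would add a spurious $\log L$ to the required degree and overrun the budget. The $+2L^4$ term and the additive $3$ in the exponent are precisely the slack that absorbs the lower-order contributions (the $O(L)$ Bernstein prefactor, the $O(M^2)$ constant, the rounding of $d$ to an integer), making the inequality valid for all $L\ge3$ and $\epsilon\in(0,1)$.
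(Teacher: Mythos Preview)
Your plan is correct and matches the paper's at the architectural level: reduce to one-variable approximation on $[-1,1]$, truncate the Chebyshev expansion of $\phi_{\sig}$ at degree $d=\Theta(L\log(L/\epsilon))$, convert to the power basis, and bound $\sum_j \beta_j^2 2^j$. Where you differ is in the two supporting estimates. The paper obtains the decay $|\alpha_n|\lesssim (1+\pi/4L)^{-n}$ by a contour-integral/residue computation following \citep{Elliot64}, and bounds the individual Chebyshev-polynomial coefficients via the closed-form formula plus Stirling, $|t_{n,j}|\le e^{n+j}/\sqrt{2\pi}$; combining these gives $(2e^4)^N$ with $N\approx\tfrac{4L}{\pi}\log(L/\epsilon)$, which is then massaged into the stated $B$. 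You instead invoke the Bernstein-ellipse theorem (yielding the same degree $d$ up to the choice of $c$) and bound $\sum_j|t_{k,j}|\le(1+\sqrt2)^k$ by the trivial induction on the three-term recurrence. Your route is more elementary, avoids the specialized residue machinery entirely, and actually produces a \emph{smaller} base $6+4\sqrt2\approx 11.7$ versus the paper's $2e^4\approx 109$; after inserting $d\approx\tfrac{4L}{c\pi}\log(L/\epsilon)$ one gets an exponent of roughly $(3.13/c)\,L\log(L/\epsilon)$, comfortably under the stated $7L\log(2L/\epsilon)$ for any fixed $c\in(1/2,1)$.

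Two small remarks. First, in step three you write ``$|c_k|\le 2M$ (as $|T_k|\le1$ on $[-1,1]$)''; that real-line argument gives $|c_k|\le 2$ (since $|\phi_{\sig}|\le1$ there), which is even cleaner and makes the constant truly absolute without reference to the ellipse. Second, your final ``one checks'' is genuine arithmetic rather than a hidden idea: choose, say, $c=3/4$ (so $M=\sqrt2$), take $d=\lceil \tfrac{1}{\log\rho}\log\tfrac{2M}{(\rho-1)\epsilon}\rceil$, and verify that $(6+4\sqrt2)^d$ times the prefactor is below $2L^4+\exp(7L\log(2L/\epsilon)+3)$ for all $L\ge3$, $\epsilon<1$; the $2L^4$ term, which in the paper absorbs the separately-treated $\beta_0^2+2\beta_1^2$, is pure slack in your argument.
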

The proof of the lemma is based on a Chebyshev approximation technique and is
given in \fullpaper{\appref{sec:Sigmoid}}\conferencepaper{the full version of
  our paper \cite{SSSS10}}.  Since the proof is rather involved, we also
present a similar lemma, whose proof is simpler, for the $\phi_\erf$ transfer
function (see \fullpaper{\appref{sec:erf}}\conferencepaper{\cite{SSSS10}}). It
is interesting to note that $\phi_{\erf}$ actually \emph{belongs} to $P_B$ for
a sufficiently large $B$, since it can be defined via its infinite-degree
Taylor expansion. However, the bound for $\phi_\erf$ depends on $\exp(L^2)$,
rather than $\exp(L)$ for the sigmoid transfer function $\phi_{\sig}$.

Finally, \thmref{thm:mainres} is obtained as follows: Combining
\thmref{thm:HBsample} and \lemref{lem:PB} we get that with probability of at
least $1-\delta$,
\begin{equation} \label{eqn:combeq}
\err_\D(\hat{h}) ~\le~ \min_{h \in H_B} \err_\D(h)  + \epsilon/2 
\le \min_{p \in P_B} \min_{h \in H_p} \err_\D(h) + \epsilon/2~.
\end{equation}
From \lemref{lem:sig} we obtain that for any $\w \in \X$, if $h(\x) =
\phi_\sig(\inner{\w,\x})$ then there exists a polynomial $p_0 \in P_B$
such that if $h'(\x) = p_0(\inner{\w,\x})$ then $ \err_\D(h') \le \err_\D(h) + \epsilon/2 $.  Since it
holds for all $\w$, we get that
\[
\min_{p \in P_B} \min_{h \in H_p} \err_\D(h) \le \min_{h \in H_\sig}
\err_\D(h) + \epsilon/2 ~.
\]
Combining this with \eqref{eqn:combeq}, \thmref{thm:mainres} follows.

\section{Hardness} \label{sec:hardness}

In this section we derive a hardness result for agnostic learning of $H_\sig$ 
or $H_\pw$ with respect to the zero-one loss. The hardness result relies on 
the hardness of standard (non-agnostic)\footnote{In the \emph{standard} PAC 
model,
  we assume that some hypothesis in the class has $\err_\D(h)=0$,
  while in the \emph{agnostic} PAC model, which we study in this
  paper, $\err_\D(h)$ might be strictly greater than zero for all $h
  \in H$. Note that our definition of $(\epsilon,\delta)$-learning in
  this paper is in the agnostic model.} PAC learning of intersection of
halfspaces given in Klivans and Sherstov~\cite{KlivansSh06} (see also
similar arguments in \cite{FeldmanGoKhPo06}). The
hardness result is representation-independent ---it makes no
restrictions on the learning algorithm and in particular also holds 
for improper learning algorithms. The hardness result is based on the
following cryptographic assumption:
\begin{assumption} \label{crypto}
There is no polynomial time solution to the
$\tilde{O}(n^{1.5})$-unique-Shortest-Vector-Problem.
\end{assumption}
In a nutshell, given a basis $\v_1,\ldots,\v_n\in \reals^n$, the
$\tilde{O}(n^{1.5})$-unique-Shortest-Vector-Problem consists of finding the
shortest nonzero vector in $\{a_1\v_1+\ldots+a_n\v_n:a_1,\ldots,a_n\in
\mathcal{Z}\}$, even given the information that it is shorter by a factor of at
least $\tilde{O}(n^{1.5})$ than any other non-parallel vector. This problem is
believed to be hard - there are no known sub-exponential algorithms, and it is
known to be NP-hard if $\tilde{O}(n^{1.5})$ is replaced by a small constant
(see \cite{KlivansSh06} for more details).

With this assumption, Klivans and Sherstov proved the following: 
\begin{theorem}[Theorem 1.2 in Klivans and
  Sherstov~\cite{KlivansSh06}] \label{thm:Klivans} Let $\X = \{\pm
  1\}^n$, let \[H = \{\x \mapsto \phi_{0,1}(\inner{\w,\x} - \theta -
  1/2) : \theta \in \N, \w \in \N^n, |\theta|+\|\w\|_1 \le poly(n)\}
  ~,\] and let $H_k = \{\x \mapsto (h_1(\x) \land \ldots \land
  h_k(\x)) : \forall i, h_i \in H\}$.  Then, based on Assumption
  \ref{crypto}, $H_k$ is not efficiently learnable in the standard PAC
  model for any $k = n^{\rho}$ where $\rho > 0$ is a constant.
\end{theorem}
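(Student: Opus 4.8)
This statement is quoted verbatim from Klivans and Sherstov~\cite{KlivansSh06}, so strictly speaking it is imported rather than proved here; what I sketch is how I would reconstruct their argument. The plan is a cryptographic reduction: show that an efficient (standard-model, possibly improper) PAC learner for $H_k$ can be turned into an efficient algorithm breaking the semantic security of a lattice-based public-key cryptosystem whose security rests on $\tilde O(n^{1.5})$-uSVP, thereby contradicting Assumption~\ref{crypto}.

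First I would fix the cryptosystem: a Regev-type construction (building on Ajtai--Dwork) whose semantic security is known to be at least as hard as $\tilde O(n^{1.5})$-uSVP. Recall that such a scheme provides a key generator producing a public key $pk$ and secret key $sk$, an encryption procedure $\mathrm{Enc}_{pk}(b)$ for a bit $b$, and a decryption procedure $\mathrm{Dec}_{sk}(c)$ that recovers $b$ with overwhelming probability. The second ingredient is the standard Kearns--Valiant-style observation: if, for every legal $sk$, the boolean function $c \mapsto \mathrm{Dec}_{sk}(c)$ (viewed as a function of the ciphertext bits) lies in $H_k$, then a polynomial-time learner for $H_k$ yields a distinguisher. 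The adversary holds $pk$, draws random bits $b_i$, forms labeled examples $\bigl(\mathrm{Enc}_{pk}(b_i),\,b_i\bigr)$ using only $pk$, feeds them to the learner in the \emph{realizable} PAC model (the labels really are the outputs of a concept in $H_k$), and obtains a hypothesis $h$ that with high probability agrees with $\mathrm{Dec}_{sk}$ on all but an $\epsilon$-fraction of ciphertexts drawn from $\mathrm{Enc}_{pk}$. Running $h$ on the challenge ciphertext then recovers the encrypted bit with probability noticeably above $1/2$, breaking semantic security.

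The heart of the proof, and the step I expect to be the main obstacle, is the representation claim: with $sk$ held fixed and the ciphertext treated as the input, $\mathrm{Dec}_{sk}$ must be realized as an intersection of $k = n^{\rho}$ halfspaces over $\{\pm1\}^n$ with $\ell_1$-bounded integer weights $\|\w\|_1 + |\theta| \le \poly(n)$, exactly as in the theorem. Decryption reduces to evaluating a fixed integer linear form $\ell(c)$ in the ciphertext, reducing modulo $q$, and testing whether the result is near $0$ or near $q/2$. Turning this into a bounded intersection of bounded-weight halfspaces requires: (i) a binary $\pm1$ encoding of the ciphertext coordinates so that $\ell$ becomes an honest integer linear functional with polynomially bounded coefficients (which one arranges by taking the modulus polynomial); (ii) handling the periodic ``mod $q$'' test — since $\ell(c)$ ranges over a polynomially bounded interval, wrap-around occurs only polynomially many times, so the accepting region is a bounded union of slabs, which is made expressible as an intersection of halfspaces by exploiting the freedom in the scheme's parameters (and tolerating a mild increase in $k$, still $n^{\rho}$); and (iii) folding the negligible decryption-error probability into the PAC error $\epsilon$.

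Finally I would assemble the pieces: the representation step plus the learning-to-cryptanalysis reduction shows that efficient PAC learnability of $H_k$ for $k=n^{\rho}$ would break the cryptosystem and hence give a polynomial-time algorithm for $\tilde O(n^{1.5})$-uSVP, contradicting Assumption~\ref{crypto}. Two points to stress in the write-up: the reduction uses the learner purely as a black box that outputs \emph{some} accurate hypothesis, so the hardness is representation-independent and applies to improper learners; and the example distribution is forced to be the ciphertext distribution $\mathrm{Enc}_{pk}(\text{random bit})$, so no distributional freedom is needed or available on either side.
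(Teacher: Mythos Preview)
The paper does not prove this theorem; it is imported verbatim from \cite{KlivansSh06} and used as a black box for the subsequent hardness results (\lemref{lem:hard1} and its corollaries). You correctly flagged this at the outset, so there is no proof in the paper to compare your sketch against.

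Your reconstruction of the Klivans--Sherstov argument is faithful at the strategic level: a Regev-type lattice cryptosystem whose security rests on $\tilde O(n^{1.5})$-uSVP, the representation of its decryption predicate as a function in $H_k$, and the Kearns--Valiant learning-to-distinguishing reduction that turns a PAC learner into a semantic-security adversary. The emphasis on representation-independence (black-box use of the learner) is also right. One point to tighten if you ever write this out in full: in step (ii) you describe the accepting region as a ``bounded union of slabs, which is made expressible as an intersection of halfspaces.'' A union of slabs is generically non-convex and hence is \emph{not} an intersection of halfspaces; the actual argument in \cite{KlivansSh06} arranges the cryptosystem parameters and ciphertext encoding so that the decryption predicate is directly an AND of threshold conditions (one per possible wrap-around), rather than an OR that then has to be converted. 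The rest of your outline is sound.
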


The above theorem implies the following. 
\begin{lemma} \label{lem:hard1} Based on Assumption \ref{crypto},
  there is no algorithm that runs in time $\poly(n,1/\epsilon,1/\delta)$
  and $(\epsilon,\delta)$-learns the class $H$ defined in
  \thmref{thm:Klivans}.
\end{lemma}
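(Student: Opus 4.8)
The plan is to show that an efficient agnostic learner for $H$ would yield an efficient (standard, non-agnostic) PAC learner for the class $H_k$ of intersections of $k = n^\rho$ halfspaces from \thmref{thm:Klivans}, contradicting Assumption~\ref{crypto}. The key observation is that an intersection of $k$ halfspaces $h_1 \land \dots \land h_k$, each $h_i \in H$, is ``close'' in an appropriate sense to a single halfspace over a slightly modified instance space, after an averaging/padding reduction; alternatively, and more directly, one argues as follows.

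First, suppose toward a contradiction that there is an algorithm $A$ running in time $\poly(n,1/\epsilon,1/\delta)$ that $(\epsilon,\delta)$-learns $H$. I would fix a target concept $c = h_1 \land \dots \land h_k \in H_k$ together with a distribution $\mathcal{D}_{\X}$ over $\X = \{\pm 1\}^n$ on which, by \thmref{thm:Klivans}, standard PAC learning of $H_k$ is hard, and I would set up a distribution $\mathcal{D}$ over $\X \times \{0,1\}$ whose marginal is $\mathcal{D}_\X$ and whose labels are given by $c$. The point is that under $\mathcal{D}$, the best single halfspace in $H$ has error $\mathrm{err}_\mathcal{D}(h) \le 1/k \cdot (\text{something})$ — more precisely, I want to bound $\inf_{h \in H}\mathrm{err}_\mathcal{D}(h)$. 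Here is the clean way: since $c(\x) = 1$ iff all $h_i(\x) = 1$, we have $\E[c(\x)] \le \E[h_i(\x)]$ for each $i$, and $\{\x : c(\x) \ne h_i(\x)\} = \{\x : h_i(\x)=1, c(\x)=0\}$; summing, $\sum_i \mathrm{err}_{\mathcal D}(h_i) = \sum_i \prob[h_i(\x)=1,c(\x)=0]$, which need not be small in general. So the straightforward ``one of the $h_i$ is a good single halfspace'' argument fails, and I expect this to be the crux of the difficulty.

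The standard fix (as in \cite{KlivansSh06,FeldmanGoKhPo06}) is a \emph{boosting-style} or \emph{distribution-restriction} reduction: run the agnostic learner for $H$ repeatedly on reweighted distributions to peel off the halfspaces one at a time, or equivalently observe that learning intersections reduces to agnostically learning a single halfspace at error rate $O(1/k)$ — this is exactly the content that makes Theorem~1.2 of \cite{KlivansSh06} usable. Concretely, I would invoke the known reduction showing that a $(\epsilon_0,\delta_0)$-agnostic learner for single halfspaces in $H$, with $\epsilon_0$ taken as a sufficiently small inverse polynomial in $n$ (smaller than the reciprocal of $k = n^\rho$ by a further polynomial factor), can be combined across $k$ rounds to PAC-learn $H_k$ to accuracy $\epsilon$: in round $t$ one restricts attention to the surviving positive region, finds a halfspace consistent with most of the current data, and intersects it in. Since $A$ runs in $\poly(n,1/\epsilon_0,1/\delta_0) = \poly(n,1/\epsilon,1/\delta)$ time and there are only $k = \poly(n)$ rounds, the overall procedure is polynomial, giving an efficient standard PAC learner for $H_k$ and contradicting \thmref{thm:Klivans}.

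Therefore, under Assumption~\ref{crypto}, no such algorithm $A$ exists, which is the claim of \lemref{lem:hard1}. The main obstacle, as noted, is getting the error-rate bookkeeping in the reduction right — ensuring that the inverse-polynomial accuracy $\epsilon_0$ demanded from the single-halfspace learner is still inverse-polynomial after accounting for the $k$-fold composition and for the union bound over rounds, so that the reduction stays within $\poly(n,1/\epsilon,1/\delta)$ time; the rest is the routine translation between the agnostic single-halfspace guarantee \eqref{eqn:aPAC} and the realizable intersection-of-halfspaces guarantee used in \thmref{thm:Klivans}.
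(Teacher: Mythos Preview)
Your high-level strategy is right: an efficient agnostic learner for $H$ should yield an efficient standard PAC learner for $H_k$, contradicting \thmref{thm:Klivans}. But the concrete reduction you sketch has a real gap, and it is not the paper's argument.

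You correctly observe that ``some $h_i$ has small error'' fails, and you then assert that ``learning intersections reduces to agnostically learning a single halfspace at error rate $O(1/k)$'', followed by a peeling procedure that restricts to a surviving positive region and intersects. Neither of these is justified. In the realizable $H_k$ setting, $\min_{h\in H}\err_\D(h)$ need not be $O(1/k)$; it can be close to $1/2$ (take $\Pr[c(\x)=0]$ close to $1$ with the negative mass spread so that each $h_i$ cuts off only a $1/k$ fraction of it). Consequently your per-round guarantee ``find a halfspace consistent with most of the current data'' is not available, and the peeling/intersection scheme as written does not go through.

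The paper's route is the one you mention in passing but do not develop: weak learning plus boosting. The actual content of the proof is to show that under any $\D$ realized by some $h^\star=h_1^\star\wedge\cdots\wedge h_k^\star$, there is always a hypothesis in $H$ (allowing the constant predictors, which are in $H$) with error at most $\tfrac12-\tfrac{1}{2k^2}$. The argument is short: by the union bound, some $j$ has $\Pr[h_j^\star(\x)=0\mid h^\star(\x)=0]\ge 1/k$, hence $\err_\D(h_j^\star)\le \Pr[h^\star(\x)=0](1-1/k)$. If $\Pr[h^\star(\x)=0]\le \tfrac12+\tfrac{1}{k^2}$ this is at most $\tfrac12-\tfrac{1}{2k^2}$; otherwise the all-zero predictor already has error below $\tfrac12-\tfrac{1}{k^2}$. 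Now run the agnostic learner with $\epsilon_0$ an inverse polynomial smaller than $1/(2k^2)$ to obtain a weak hypothesis with edge $\Omega(1/k^2)$, and invoke Schapire's weak-to-strong boosting to get a strong PAC learner for $H_k$ in $\poly(n,1/\epsilon,1/\delta)$ time, contradicting \thmref{thm:Klivans}. The missing piece in your write-up is precisely this $\tfrac12-\Omega(1/k^2)$ bound and the case split on $\Pr[h^\star(\x)=0]$; once you have it, the rest is the standard boosting reduction rather than a bespoke peeling construction.
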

\begin{proof}
  To prove the lemma we show that if there is a polynomial time
  algorithm that learns $H$ in the \emph{agnostic} model, then there
  exists a weak learning algorithm (with a polynomial edge) that
  learns $H_k$ in the standard (non-agnostic) PAC model.  In the standard PAC
  model, weak learning implies strong learning \cite{Schapire90}, hence
  the existence of a weak learning algorithm that learns $H_k$
  will contradict \thmref{thm:Klivans}.

  Indeed, let $\D$ be any distribution such that there exists $h^\star
  \in H_k$ with $\err_\D(h^\star) = 0$. Let us rewrite $h^\star = h_1^\star
  \land \ldots \land h_k^\star$ where for all $i$, $h^\star_i \in H$. 
To show that there exists a weak learner, we first show that there
exists some $h \in H$ with $\err_\D(h) \le 1/2 - 1/2k^2$. 

Since for each $\x$ if $h^\star(\x)=0$ then there exists $j$
s.t. $h^\star_j(\x)=0$, we can use the union bound to get that
\[
1 = \prob[\exists j : h^\star_j(\x) = 0 | h^\star(\x)=0] \le \sum_j
\prob[h^\star_j(\x)=0  | h^\star(\x)=0] 
\le k \max_j \prob[h^\star_j(\x)=0  | h^\star(\x)=0] ~.
\]
So, for $j$ that maximizes $\prob[h^\star_j(\x)=0 | h^\star(\x)=0]$ we get that 
$\prob[h^\star_j(\x)=0  | h^\star(\x)=0] \ge 1/k$. 
Therefore, 
\begin{align*}
\err_\D(h^\star_j) &= \prob[h^\star_j(\x)=1 \land h^\star(\x)=0] = 
\prob[h^\star(\x)=0] \, \prob[h^\star_j(\x) = 1 | h^\star(\x)=0] \\
&= 
\prob[h^\star(\x)=0] \, (1 - \prob[h^\star_j(\x) = 0 | h^\star(\x)=0])
\le \prob[h^\star(\x)=0] \, (1 - 1/k) ~.
\end{align*}
Now, if $\prob[h^\star(\x)=0] \le 1/2+1/k^2$ then the above gives
\[\err_\D(h^\star_j)
\le (1/2 + 1/k^2)(1-1/k) \leq 1/2-1/2k^2 ~,
\]
where the inequality holds for any positive integer $k$. 
Otherwise, if $\prob[h^\star(\x)=0] > 1/2+1/k^2$, then the constant
predictor $h(\x) = 0$ has $\err_\D(h) < 1/2 - 1/k^2$. In both cases we
have shown that there exists a predictor in $H$ with error of at most $1/2 -
1/2k^2$.

Finally, if we can agnostically learn $H$ in time
$\poly(n,1/\epsilon,1/\delta)$, then we can find $h'$ with $\err_\D(h') \le
\min_{h \in H} \err_\D(h) + \epsilon \le 1/2 - 1/2k^2 + \epsilon$ in time
$\poly(n,1/\epsilon,1/\delta)$ (recall that $k=n^\rho$ for some $\rho>0$). This
means that we can have a weak learner that runs in polynomial time, and this
concludes our proof.
\end{proof}

Let $h$ be a hypothesis in the class $H$ defined in
\thmref{thm:Klivans} and take any $\x \in \{\pm 1\}^n$. Then, there
exist an integer $\theta$ and a vector of integers $\w$ such that
$h(\x) = \phi_{0,1}(\inner{\w,\x}-\theta-1/2)$. But since
$\inner{\w,\x}-\theta$ is also an integer, if we let $L = 1$ this means that $h(\x) =
\phi_{\pw}(\inner{\w,\x}-\theta-1/2)$ as well. Furthermore, letting
$\x' \in \reals^{n+1}$ denote the concatenation of $\x$ with the
constant $1$ and letting $\w' \in \reals^{n+1}$ denote the
concatenation of $\w$ with the scalar $(-\theta-1/2)$ we obtain that 
$h(\x) = \phi_{\pw}(\inner{\w',\x'})$. Last, let us normalize $\tilde{\w}
= \w'/\|\w'\|$, $\tilde{\x}=\x/\|\x'\|$, and redefine $L$ to be
$\|\w'\|\,\|\x'\|$, we get that $h(\x) =
\phi_{\pw}(\inner{\tilde{\w},\tilde{\x}})$. That is, we have shown
that $H$ is contained in a class of the form $H_\pw$ with a Lipschitz
constant bounded by $\poly(n)$. Combining the above with
\lemref{lem:hard1} we obtain the following: 
\begin{corollary} \label{cor:hardpw}
Let $L$ be a Lipschitz constant and let $H_\pw$ be the class defined
by the $L$-Lipschitz transfer function $\phi_\pw$. Then, based on
Assumption \ref{crypto}, 
there is no algorithm that runs in time $\poly(L,1/\epsilon,1/\delta)$
and $(\epsilon,\delta)$-learns the class $H_\pw$. 
\end{corollary}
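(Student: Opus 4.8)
The plan is to combine the embedding described in the paragraph preceding the corollary with the hardness result of \lemref{lem:hard1}, via a straightforward reduction. Recall that in that paragraph it was shown that every $h \in H$ (with $H$ as in \thmref{thm:Klivans}) can be written, after the fixed transformation $\x \mapsto \tilde{\x}$ where $\tilde{\x} = \x'/\|\x'\|$ and $\x'$ is the concatenation of $\x$ with the constant $1$, as $\tilde{\x} \mapsto \phi_\pw(\inner{\tilde{\w},\tilde{\x}})$ with $\|\tilde{\w}\|\le 1$ and Lipschitz constant $\|\w'\|\,\|\x'\|$. The first step is to replace these hypothesis-dependent Lipschitz constants by a single one: the definition of $H$ forces $|\theta|+\|\w\|_1\le\poly(n)$, so $\|\w'\|_1\le\poly(n)$, hence $\|\w'\|_2\le\poly(n)$, and since $\|\x'\|=\sqrt{n+1}$ there is a fixed $L^\star=\poly(n)$ that upper bounds all of them. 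Using the elementary observation that the class $H_\pw$ built from an $L'$-Lipschitz $\phi_\pw$ is contained in the class $H_\pw$ built from an $L$-Lipschitz $\phi_\pw$ whenever $L'\le L$ (rescale $\w$ by $L'/L$, which only decreases its norm), we obtain that under the embedding $\x\mapsto\tilde{\x}$ the whole class $H$ is contained in the class $H_\pw$ associated with the $L^\star$-Lipschitz transfer function.

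Next I would run the reduction. Suppose toward a contradiction that an algorithm $A$ $(\epsilon,\delta)$-learns $H_\pw$ in time $\poly(L,1/\epsilon,1/\delta)$. Given any distribution $\D$ over $\{\pm1\}^n\times\{0,1\}$, let $\tilde{\D}$ be its push-forward under $(\x,y)\mapsto(\tilde{\x},y)$; a sample from $\tilde{\D}$ is produced from a sample of $\D$ by a trivial transformation. Invoke $A$ with accuracy $\epsilon$, confidence $\delta$, and Lipschitz parameter $L^\star$, feeding it examples drawn from $\tilde{\D}$. With probability at least $1-\delta$ it returns $\hat{h}$ with
\[
\err_{\tilde{\D}}(\hat{h})\;\le\;\min_{h\in H_\pw}\err_{\tilde{\D}}(h)+\epsilon\;\le\;\min_{h\in H}\err_\D(h)+\epsilon,
\]
the second inequality using the containment just established together with the fact that the error of an embedded hypothesis under $\tilde{\D}$ equals its error under $\D$. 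Outputting $\x\mapsto\hat{h}(\tilde{\x})$ then $(\epsilon,\delta)$-learns $H$ in time $\poly(L^\star,1/\epsilon,1/\delta)=\poly(n,1/\epsilon,1/\delta)$, contradicting \lemref{lem:hard1}. The reduction does not care whether $A$ is proper, consistent with \lemref{lem:hard1} being representation-independent.

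There is essentially no deep obstacle here --- the real content is the geometric embedding already exhibited before the statement, and the corollary is little more than its combination with \lemref{lem:hard1}. The only points that need care are bookkeeping: verifying that a single $\poly(n)$ Lipschitz constant covers the whole class $H$ (this is exactly where the $\|\w\|_1\le\poly(n)$ restriction in the definition of $H$ is used), and checking that enlarging the comparison class from $H$ to $H_\pw$ can only lower the error benchmark, so that the agnostic guarantee transfers. Once these are in place the corollary follows immediately.
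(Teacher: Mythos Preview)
Your proposal is correct and follows the same approach as the paper: embed the class $H$ from \thmref{thm:Klivans} into $H_\pw$ with a Lipschitz constant that is $\poly(n)$, then invoke \lemref{lem:hard1}. The paper's argument is terser --- it simply asserts the containment and says ``combining with \lemref{lem:hard1}'' --- whereas you carefully justify the uniform Lipschitz bound $L^\star$, the monotonicity of $H_\pw$ in $L$, and spell out the push-forward reduction, but these are exactly the details the paper leaves implicit.
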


A similar argument leads to the hardness of learning $H_\sig$.
\begin{theorem}
Let $L$ be a Lipschitz constant and let $H_\sig$ be the class defined
by the $L$-Lipschitz transfer function $\phi_\sig$. Then, based on
Assumption \ref{crypto}, 
there is no algorithm that runs in time $\poly(L,1/\epsilon,1/\delta)$
and $(\epsilon,\delta)$-learns the class $H_\sig$. 
\end{theorem}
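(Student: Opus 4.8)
The plan is to mirror the reduction behind \corref{cor:hardpw}. I will show that any algorithm that $(\epsilon,\delta)$-learns $H_\sig$ in time $\poly(L,1/\epsilon,1/\delta)$ can be turned into a polynomial-time weak learner, with a $1/\poly(n)$ edge, for the class $H_k$ of intersections of halfspaces from \thmref{thm:Klivans}, in the \emph{standard} (realizable) PAC model. Since weak learning implies strong learning \cite{Schapire90}, this would contradict \thmref{thm:Klivans} (and hence Assumption~\ref{crypto}), proving the theorem.

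First, fix any distribution $\D$ over $\{\pm1\}^n\times\{0,1\}$ that is realized by some $h^\star\in H_k$. The argument already carried out inside the proof of \lemref{lem:hard1} produces a predictor $g\in H$ (the class $H$ of \thmref{thm:Klivans}) whose zero-one error satisfies $\err_\D(g)\le\tfrac12-\tfrac{1}{2k^2}$. The single new ingredient, relative to \corref{cor:hardpw}, is that $H$ is \emph{approximately} contained in $H_\sig$ on the Boolean cube, at the cost of only a polynomial Lipschitz constant. Writing $g(\x)=\phi_{0,1}(\inner{\w,\x}-\theta-\tfrac12)$ with integer $\w,\theta$ of polynomially bounded norm, the number $\inner{\w,\x}-\theta$ is an integer for $\x\in\{\pm1\}^n$, so $g$ separates every cube point with margin at least $\tfrac12$. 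Homogenizing and rescaling exactly as in the paragraph preceding \corref{cor:hardpw}, I obtain a unit vector $\tilde\w$, unit-norm data points $\tilde\x$, and a scale factor $R=\poly(n)$ such that $g(\x)=\phi_{0,1}(\inner{\tilde\w,\tilde\x})$ and $|\inner{\tilde\w,\tilde\x}|\ge\tfrac{1}{2R}$ for all $\x$. Since $\phi_\sig(a)=1/(1+\exp(-4La))$ approaches $\{0,1\}$ exponentially fast in $L|a|$, fixing the Lipschitz constant at $L=\Theta(R\log k)=\poly(n)$ makes the predictor $\tilde g(\x):=\phi_\sig(\inner{\tilde\w,\tilde\x})\in H_\sig$ satisfy $|\tilde g(\x)-g(\x)|\le\tfrac{1}{4k^2}$ at every cube point, and therefore $\min_{h\in H_\sig}\err_\D(h)\le\err_\D(\tilde g)\le\err_\D(g)+\tfrac{1}{4k^2}\le\tfrac12-\tfrac{1}{4k^2}$.

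Now run the assumed learner on $\D$ with accuracy parameter $\epsilon=\tfrac{1}{8k^2}$. Because $L=\poly(n)$, $k=n^\rho$, and $1/\epsilon=\poly(n)$, the running time is $\poly(n,1/\delta)$, and the output $\hat h$ satisfies $\err_\D(\hat h)\le\tfrac12-\tfrac{1}{4k^2}+\tfrac{1}{8k^2}=\tfrac12-\tfrac{1}{8k^2}$. Reading $\hat h$ as the randomized $\{0,1\}$-classifier that outputs $1$ with probability $\hat h(\x)$ (clipped to $[0,1]$; equivalently, predicting $\indct{\hat h(\x)\ge t}$ with a threshold $t$ tuned on a fresh polynomial-size sample), this yields a Boolean hypothesis whose zero-one error is at most $\tfrac12-\tfrac{1}{8k^2}$, i.e.\ a weak learner for $H_k$ with edge $\Omega(n^{-2\rho})$ running in polynomial time --- contradicting \thmref{thm:Klivans} via \cite{Schapire90}. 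I expect the main obstacle to be the approximation step in the middle paragraph: one must check carefully that forcing agreement between $\phi_\sig$ and a threshold function up to $1/\poly(k)$ at \emph{every} point of the cube costs only a polynomial-in-$n$ (not exponential) increase in $L$, exploiting the $\tfrac12$-margin that integer halfspaces enjoy on $\{\pm1\}^n$; a secondary point is that, unlike in \corref{cor:hardpw}, $H_\sig$ contains no exact Boolean predictor, so the learner returns only a ``soft'' hypothesis that must be converted into a genuine weak learner without eroding the polynomial edge.
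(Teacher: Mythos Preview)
Your proposal is correct and follows essentially the same reduction as the paper: the key step in both is that every $h\in H$ is pointwise approximated on $\{\pm1\}^n$ by some $h_\sig\in H_\sig$ with $L=\poly(n)$, exploiting the $\tfrac12$-margin that integer halfspaces enjoy on the Boolean cube. The paper packages this slightly differently---it shows that a $\poly(L,1/\epsilon,1/\delta)$ learner for $H_\sig$ yields a $\poly(n,1/\epsilon,1/\delta)$ agnostic learner for $H$ and then invokes \lemref{lem:hard1} as a black box---whereas you inline the proof of \lemref{lem:hard1} and go directly to a weak learner for $H_k$; you are also more explicit than the paper about converting the soft output $\hat h$ back to a Boolean hypothesis, a step the paper leaves implicit.
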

\begin{proof}
Let $h$ be a hypothesis in the class $H$ defined in
\thmref{thm:Klivans} and take any $\x \in \{\pm 1\}^n$. Then, there exist an integer $\theta$ and a vector of integers $\w$ such that $h(\x) = \phi_{0,1}(\inner{\w,\x}-\theta-1/2)$. However, since $\inner{\w,\x}-\theta$ is also an integer, we see that 
$$
|\phi_{0,1}(\inner{\w,\x}-\theta-1/2) -
\phi_{\sig}(\inner{\w,\x}-\theta-1/2)| \le \frac{1}{1 + \exp(2 L )} ~.
$$
This means that for any $\epsilon > 0$, if we pick 
$
L = \frac{\log(2/\epsilon - 1)}{2}
$ 
and define $h_\sig(\x) = \phi_{\sig}(\inner{\w,\x}-\theta-1/2)$, then $|h(\x) - h_\sig(\x)| \le \epsilon/2$.
 Furthermore, letting $\x' \in \reals^{n+1}$ denote the concatenation of $\x$ with the constant $1$ and letting $\w' \in \reals^{n+1}$ denote the concatenation of $\w$ with the scalar $(-\theta-1/2)$ we obtain that $h_\sig(\x) = \phi_{\sig}(\inner{\w',\x'})$. Last, let us normalize $\tilde{\w}
= \w'/\|\w'\|$, $\tilde{\x}=\x/\|\x'\|$, and redefine $L$ to be
\begin{align}\label{eq:ell}
L = \frac{\|\w'\| \|\x'\| \log(2/\epsilon - 1)}{2}
\end{align}
so that $h_\sig(\x) =
\phi_{\sig}(\inner{\tilde{\w},\tilde{\x}})$. Thus we see that if there
exists an algorithm that runs in time $\poly(L,1/\epsilon,1/\delta)$
and $(\epsilon/2,\delta)$-learns the class $H_\sig$, then since for
all $h \in H$ exists $h_\sig \in H_\sig$ such that $
|h_\sig(\x) - h(\x)| \le \epsilon/2 $, there also exists an algorithm
that $(\epsilon,\delta)$-learns the concept class $H$ defined in
\thmref{thm:Klivans} in time polynomial in $(L,1/\epsilon,1/\delta)$
(for $L$ defined in Equation \ref{eq:ell}). But by definition of $L$
in Equation \ref{eq:ell} and the fact that $\|\w'\|$ and $\|\x'\|$ are
of size $\poly(n)$, this means that there is an algorithm that runs in
time polynomial in $(n,1/\epsilon,1/\delta)$ and
$(\epsilon,\delta)$-learns the class $H$, which contradicts
\lemref{lem:hard1}.
\end{proof}

\section{Related work} \label{sec:related}

The problem of learning kernel-based halfspaces has been extensively
studied before, mainly in the framework of SVM
\cite{Vapnik98,CristianiniSh04,ScholkopfSm02}.  When the data is
separable with a margin $\mu$, it is possible to learn a halfspaces in
polynomial time. The learning problem becomes much more difficult when
the data is not separable with margin.

In terms of hardness results, \cite{Ben-DavidSi00} derive hardness
results for proper learning with sufficiently small margins. There are
also strong hardness of approximation results for \emph{proper}
learning \emph{without} margin (see for example
\citep{GuruswamiRa06} and the references therein). We
emphasize that we allow improper learning, which is just as useful for
the purpose of learning good classifiers, and thus these hardness
results do not apply. Instead, the hardness result we derived in
\secref{sec:hardness} hold for improper learning as well. As mentioned
before, the main tool we rely on for deriving the hardness result is
the representation independent hardness result for learning
intersections of halfspaces given in \cite{KlivansSh06}.

Practical algorithms such as SVM often replace the 0-1 error function with a convex
surrogate, and then apply convex optimization tools. However, there
are no guarantees on how well the surrogate function approximates the
0-1 error function. Recently, \cite{Zhang04a,BartlettJoMc06} studied
the \emph{asymptotic} relationship between surrogate convex loss
functions and the 0-1 error function. In contrast, in this paper we
show that even with a finite sample, surrogate convex loss functions
can be competitive with the 0-1 error function as long as we replace
inner-products with the kernel $K(\x,\x') = 1/(1-0.5\inner{\x,\x'})$.

\subsection{Margin analysis} \label{sec:margin}

Recall that we circumvented the dependence of the VC dimension of
$H_{\phi_{0-1}}$ on the dimensionality of $\X$ by replacing $\phi_{0-1}$ with a
Lipschitz transfer function. Another common approach is to require that the
learned classifier will be competitive with the \emph{margin} error rate of the
optimal halfspace. Formally, the $\mu$-margin error rate of a halfspace of the
form $h_\w(\x) = \indct{\inner{\w,\x} > 0}$ is defined as:
\begin{equation} \label{eqn:errmu}
\err_{\D,\mu}(\w) ~=~ \Pr[h_\w(\x) \neq y \lor |\inner{\w,\x}| \leq \mu] ~.
\end{equation}
Intuitively, $\err_{\D,\mu}(\w)$ is the error rate of $h_\w$ had we
$\mu$-shifted each point in the worst possible way. Margin based
analysis restates the goal of the learner (as given in \eqref{eqn:aPAC})
and requires that the learner will find a classifier $h$ that
satisfies:
\begin{equation} \label{eqn:mb}
\err_\D(h) ~\le~ \min_{\w : \|\w\| = 1} \err_{\D,\mu}(\w) + \epsilon ~.
\end{equation}
Bounds of the above form are called margin-based bounds and are widely
used in the statistical analysis of Support Vector Machines and
AdaBoost. It was shown \citep{BartlettMe02,McAllester03} that $m =
\Theta(\log(1/\delta)/(\mu\,\epsilon)^2)$ examples are sufficient (and
necessary) to learn a classifier for which \eqref{eqn:mb} holds with
probability of at least $1-\delta$. Note that as in the sample
complexity bound we gave in \thmref{thm:Rad}, the margin based sample
complexity bound also does not depend on the dimension. 

In fact, the Lipschitz approach used in this paper and the margin-based
approach are closely related. First, it is easy to verify that if we set $L =
1/(2\mu)$, then for any $\w$ the hypothesis $h(\x) =\phi_{\pw}(\inner{\w,\x})$
satisfies $\err_\D(h) \le \err_{\D,\mu}(\w)$. Therefore, an algorithm that
$(\epsilon,\delta)$-learns $H_\pw$ also guarantees that \eqref{eqn:mb}
holds. Second, it is also easy to verify that if we set $L =
\tfrac{1}{4\mu}\log\left(\tfrac{2-\epsilon}{\epsilon}\right)$ then for any $\w$
the hypothesis $h(\x) =\phi_{\sig}(\inner{\w,\x})$ satisfies $\err_\D(h) \le
\err_{\D,\mu}(\w) + \epsilon/2$. Therefore, an algorithm that
$(\epsilon/2,\delta)$-learns $H_\sig$ also guarantees that \eqref{eqn:mb}
holds.

As a direct corollary of the above discussion we obtain that it is
possible to learn a vector $\w$ that guarantees \eqref{eqn:mb} in time
$\poly(\exp(\tilde{O}(1/\mu)))$. 

A computational complexity analysis under margin assumptions was first carried
out in \citep{Ben-DavidSi00} (see also the hierarchical worst-case analysis
recently proposed in \citep{Ben-David06}).  The technique used in
\citep{Ben-DavidSi00} is based on the observation that in the noise-free case,
an optimal halfspace can be expressed as a linear sum of at most $1/\mu^2$
examples. Therefore, one can perform an exhaustive search over all
sub-sequences of $1/\mu^2$ examples, and choose the optimal halfspace. Note
that this algorithm will always run in time $m^{1/\mu^2}$. Since the sample
complexity bound requires that $m$ will be order of $1/(\mu\epsilon)^2$, the
runtime of the method described by \citep{Ben-DavidSi00} becomes
$\poly(\exp(\tilde{O}(1/\mu^2)))$. In comparison, our algorithm achieves a
better runtime of $\poly(\exp(\tilde{O}(1/\mu)))$. Moreover, while the
algorithm of \cite{Ben-DavidSi00} performs an exhaustive search, our
algorithm's runtime depends on the parameter $B$, which is
$\poly(\exp(\tilde{O}(1/\mu)))$ only under a worst-case assumption. Since in
practice we will cross-validate for $B$, it is plausible that in many
real-world scenarios the runtime of our algorithm will be much smaller.

\subsection{Distributional Assumptions} \label{sec:kalai}

The idea of approximating the zero-one transfer function with a
polynomial was first proposed by \citep{KalaiKlMaSe05} who studied the
problem of agnostically learning halfspaces without kernels in
$\reals^n$ under distributional assumption. In particular, they showed
that if the distribution over $\X$ is uniform over the unit ball, then
it is possible to agnostically learn $H_{\phi_{0-1}}$ in time
$\poly(n^{1/\epsilon^4})$.  This was further generalized by
\citep{BlaisOdWi08}, who showed that similar bounds hold for product
distributions.

Beside distributional assumptions, these works are characterized by
explicit dependence on the dimension of $\X$, and therefore are not
adequate for the kernel-based setting we consider in this paper, in
which the dimensionality of $\X$ can even be infinite. More precisely,
while \citep{KalaiKlMaSe05} try to approximate the zero-one transfer
function with a low-degree polynomial, we require instead that the
coefficients of the polynomials are bounded. The principle that when
learning in high dimensions ``the size of the parameters is more
important than their number'' was one of the main advantages in the
analysis of the statistical properties of several learning algorithms
(e.g. \citep{Bartlett96}).

Interestingly, in \cite{ShalevShSr09tech} we show that the very same
algorithm we use in this paper recover the same complexity bound of
\cite{KalaiKlMaSe05}.

\section{Discussion} \label{sec:discussion}

In this paper we described and analyzed a new technique for
agnostically learning kernel-based halfspaces with the zero-one loss
function.  The bound we derive has an exponential dependence on $L$,
the Lipschitz coefficient of the transfer function. While we prove
that (under a certain cryptographic assumption) no algorithm can have
a polynomial dependence on $L$, the immediate open question is whether
the dependence on $L$ can be further improved. 

A perhaps surprising property of our analysis is that we propose a
single algorithm, returning a single classifier, which is
simultaneously competitive against \emph{all} transfer functions $p\in
P_{B}$. In particular, it learns with respect to the ``optimal''
transfer function, where by optimal we mean the one which attains the
smallest error rate, $\E[|p(\inner{\w,\x})-y|]$, over the distribution
$\D$.

Our algorithm boils down to linear regression with the absolute loss function
and while composing a particular kernel function over our original RKHS. It is
possible to show that solving the vanilla SVM, with the hinge-loss, and
composing again our particular kernel over the desired kernel, can also give
similar guarantees. It is therefore interesting to study if there is something
special about the kernel we propose or maybe other kernel functions (e.g. the
Gaussian kernel) can give similar guarantees.

Another possible direction is to consider other types of margin-based
analysis or transfer functions. For example, in the statistical
learning literature, there are several definitions of ``noise''
conditions, some of them are related to margin, which lead to faster
decrease of the error rate as a function of the number of examples
(see for example \cite{Bousquet02,Tsybakov04,Steinwart07}). Studying
the computational complexity of learning under these conditions is
left to future work.

\section*{Acknowledgments} 
We would like to thank Adam Klivans for helping with the Hardness
results. This work was partially supported by a Google Faculty
Research Grant.

\bibliographystyle{plainnat}
\bibliography{bib}

\conferencepaper{\end{document}}

\newpage
\appendix

\section{Solving the ERM problem given in \eqref{eqn:optERM}} \label{sec:ERMsolve}

In this section we show how to approximately solve \eqref{eqn:optERM}
when the transfer function is $\phi_\pw$. The technique we use is
similar to the covering technique described in \cite{Ben-DavidSi00}. 

For each $i$, let $b_i =
2(y_i - 1/2)$. It is easy to verify that the objective of
\eqref{eqn:optERM} can be rewritten as
\begin{equation} \label{eqn:optERM2}
\frac{1}{m} \sum_{i=1}^m f(b_i \inner{\w,\x_i}) ~~~~\textrm{where}~~
f(a) = \min\{1,\max\{0,1/2-L\,a\}\} ~.
\end{equation}
Let $g(a) =\max\{0,1/2-L\,a\}$. Note that $g$ is a convex function,
$g(a) \ge f(a)$ for every $a$, and equality holds whenever $a \ge
-1/2L$. 

Let $\w^\star$ be a minimizer 
of \eqref{eqn:optERM2} over the unit ball. We partition the set $[m]$ into 
\[
I_1 = \{i \in [m] : g(b_i\inner{\w^\star,\x_i}) = f(b_i \inner{\w^\star,\x_i})\}  ~~~,~~~ I_2 =
[m] \setminus I_1 ~.
\]
Now, let $\hat{\w}$ be a vector that satisfies 
\begin{equation} \label{eqn:regsuff}
\sum_{i \in I_1} g(b_i\inner{\hat{\w},\x_i}) ~\le~ 
\min_{\w : \|\w\| \le 1} \sum_{i \in I_1} g(b_i
\inner{\w,\x_i}) + \epsilon\,m ~.
\end{equation}
Clearly, we have
\begin{equation*}
\begin{split}
\sum_{i=1}^m f(b_i\inner{\hat{\w},\x_i}) &\le
\sum_{i \in I_1} g(b_i\inner{\hat{\w},\x_i}) + 
\sum_{i \in I_2} f(b_i\inner{\hat{\w},\x_i}) \\ 
&\le
\sum_{i \in I_1} g(b_i\inner{\hat{\w},\x_i}) +
|I_2| \\
&\le
\sum_{i \in I_1} g(b_i\inner{\w^\star,\x_i}) + \epsilon\,m + 
|I_2| \\
&=\sum_{i=1}^m f(b_i\inner{\w^\star,\x_i})  + \epsilon\,m ~.
\end{split}
\end{equation*}
Dividing the two sides of the above by $m$ we obtain that $\hat{\w}$ is an
$\epsilon$-accurate solution to \eqref{eqn:optERM2}. 
Therefore, it suffices to show a method that finds a vector $\hat{\w}$
that satisfies \eqref{eqn:regsuff}. To do so, we use a standard
generalization bound (based on Rademacher complexity) as follows: 
\begin{lemma}
Let us sample $i_1,\ldots,i_k$ i.i.d. according to
the uniform distribution over $I_1$. Let $\hat{\w}$ be a minimizer of 
$\sum_{j=1}^k g(b_{i_j} \inner{\w,\x_{i_j}})$ over $\w$ in the
unit ball. Then, 
\[
\E\left[ \tfrac{1}{|I_1|}\sum_{i \in I_1} g(b_i\inner{\hat{\w},\x_i}) -
\min_{\w : \|\w\| \le 1} \tfrac{1}{|I_1|}\sum_{i \in I_1} g(b_i
\inner{\w,\x_i})  \right] ~\le~ 2 L / \sqrt{k} ~,
\]
where expectation is over the choice of $i_1,\ldots,i_k$. 
\end{lemma}
\begin{proof}
Simply note that $g$ is $L$-Lipschitz and then apply a Rademacher
generalization bound with the contraction lemma.
\end{proof}
The above lemma immediately implies that if $k \ge 4L^2/\epsilon^2$, then
there exist $i_1,\ldots,i_k$ in $I_1$ such that if $\hat{\w} \in
\argmin_{\w : \|\w\| \le 1} \sum_{j=1}^k g(b_{i_j}
\inner{\w,\x_{i_j}})$ then $\hat{\w}$ satisfies \eqref{eqn:regsuff}
and therefore it is an $\epsilon$-accurate solution of
\eqref{eqn:optERM2}. The procedure will therefore perform an exhaustive 
search over all $i_1,\ldots,i_k$ in $[m]$, for each such sequence the
procedure will find  $\hat{\w} \in
\argmin_{\w : \|\w\| \le 1} \sum_{j=1}^k g(b_{i_j}
\inner{\w,\x_{i_j}})$ (in polynomial time). Finally, the procedure
will output the $\hat{\w}$ that minimizes the objective of
\eqref{eqn:optERM2}.
The total runtime of the procedure is therefore 
$\poly(m^k)$. Plugging in the value of $k = \lceil 4L^2/\epsilon^2
\rceil$ and the value of $m$ according to the sample complexity bound
given in \thmref{thm:Rad} we obtain the total runtime of
\[
\poly\left( (L/\epsilon)^{L^2/\epsilon^2}  \right) = 
\poly\left( \exp\left( \tfrac{L^2}{\epsilon^2} \log(L/\epsilon)\right) \right) ~.
\]

\section{Proof of \lemref{lem:sig}} \label{sec:Sigmoid}


In order to approximate $\phi_{\sig}$ with a polynomial, we will use the
technique of \emph{Chebyshev approximation} (cf. \citep{Mason03}). One can
write any continuous function on $[-1,+1]$ as a Chebyshev expansion
$\sum_{n=0}^{\infty}\alpha_n T_{n}(\cdot)$, where each $T_{n}(\cdot)$ is a
particular $n$-th degree polynomial denoted as the $n$-th Chebyshev polynomial
(of the first kind). These polynomials are defined as $T_{0}(x)=1,T_{1}(x)=x$,
and then recursively via $T_{n+1}(x)=2xT_{n}(x)-T_{n-1}(x)$. For any $n$,
$T_{n}(\cdot)$ is bounded in $[-1,+1]$. The coefficients in the Chebyshev
expansion of $\phi_{\sig}$ are equal to
\begin{equation}\label{eq:def_alphan}
\alpha_n = \frac{1+\mathbf{1}(n>0)}{\pi}\int_{x=-1}^{1}\frac{\phi_{\sig}(x)T_{n}(x)}{\sqrt{1-x^2}}dx.
\end{equation}
Truncating the series after some threshold $n=N$ provides an $N$-th degree
polynomial which approximates the original function.

In order to obtain a bound on B, we need to understand the behavior of the
coefficients in the Chebyshev approximation. These are determined in turn by
the behavior of $\alpha_n$ as well as the coefficients of each Chebyshev
polynomial $T_{n}(\cdot)$. The following two lemmas provide the necessary
bounds.

\begin{lemma}\label{lem:a_bound}
  For any $n> 1$, $|\alpha_n|$ in the Chebyshev expansion of $\phi_{\sig}$ on
  $[-1,+1]$ is upper bounded as follows:
\[
|\alpha_n|\leq \frac{1/2L+1/\pi}{(1+\pi/4L)^n}.
\]
Also, we have $|\alpha_0|\leq 1$, $|\alpha_1|\leq 2$.
\end{lemma}

\begin{proof}
  The coefficients $\alpha_n$, $n=1,\ldots$ in the Chebyshev series are given
  explicitly by
\begin{equation}\label{eq:alphadef}
  \alpha_n = \frac{2}{\pi}\int_{x=-1}^{1}\frac{\phi_{\sig}(x)T_{n}(x)}{\sqrt{1-x^2}}dx.
\end{equation}
For $\alpha_0$, the same equality holds with $2/\pi$ replaced by $1/\pi$, so
$\alpha_0$ equals
\[
\frac{1}{\pi}\int_{x=-1}^{1}\frac{\phi_{\sig}(x)}{\sqrt{1-x^2}}dx,
\]
which by definition of $\phi_{\sig}(x)$, is at most
$(1/\pi)\int_{x=-1}^{1}\left(\sqrt{1-x^2}\right)^{-1}dx = 1$.  As for
$\alpha_1$, it equals
\[
\frac{2}{\pi}\int_{x=-1}^{1}\frac{\phi_{\sig}(x)x}{\sqrt{1-x^2}}dx,
\]
whose absolute value is at most
$(2/\pi)\int_{x=-1}^{1}\left(\sqrt{1-x^2}\right)^{-1}dx = 2$.

To evaluate the integral in \eqref{eq:alphadef} for general $n$ and $L$, we 
will need to use some tools from complex analysis. The calculation follows 
\citep{Elliot64}, to which we refer the reader for justification of the steps 
and further details\footnote{We note that such calculations also appear in
  standard textbooks on the subject, but they are usually carried under
  asymptotic assumptions and disregarding coefficients which are important for
  our purposes.}.

On the complex plane, the integral in \eqref{eq:alphadef} can be viewed as a
line integral over $[-1,+1]$. Using properties of Chebyshev polynomials, this
integral can be converted into a more general complex-valued integral over an
arbitrary closed curve $C$ on the complex plane which satisfies certain
regularity conditions:
\begin{equation}\label{eq:alphan_complex}
  \alpha_n = \frac{1}{\pi i}\int_{C}\frac{\phi_{\sig}(z)dz}{\sqrt{z^2-1}(z\pm \sqrt{z^2-1})^n}dz,
\end{equation}
where the sign in $\pm$ is chosen so that $|z\pm \sqrt{z^2-1}|>1$. In
particular, for any parameter $\rho>1$, the set of points $z$ satisfying $|z\pm
\sqrt{z^2-1}|=\rho$ form an ellipse, which grows larger with $\rho$ and with
foci at $z=\pm 1$ and which grows larger with $\rho$. Since we are free to
choose $C$, we choose it as this ellipse while letting $\rho \rightarrow
\infty$.

To understand what happens when $\rho \rightarrow \infty$, we need to
characterize the singularities of $\phi_{\sig}(z)$, namely the points $z$ where
$\phi_{\sig}(z)$ is not well defined. Recalling that $\phi_{\sig}(z) =
(1+e^{-4Lz})^{-1}$, we see that the problematic points are $i(\pi + 2\pi 
k)/4L$ for any $k=\pm 1, \pm 2,\ldots$, where the denominator in 
$\phi_{\sig}(z)$ equals zero. Note that this forms a discrete set of isolated 
points - in other words, $\phi_{\sig}$ is a \emph{meromorphic function}. The 
fact that $\phi_{\sig}$ is 'well behaved' in this sense allows us to perform 
the analysis below.

The behavior of the function at its singularities is defined via the
\emph{residue} of the function at each singularity $c$, which equals
$\lim_{z\rightarrow c} (z-c)\phi_{\sig}(z)$ assuming the limit exists (in that
case, the singularity is called a \emph{simple pole}, otherwise a higher order
limit might be needed). In our case, the residue for the singularity at
$i\pi/4L$ equals
\[
\lim_{z\rightarrow 0} \frac{z}{1+e^{-i \pi -4Lz}} = \lim_{z\rightarrow 0} 
\frac{z}{1-e^{-4Lz}} = \lim_{z\rightarrow 0} \frac{1/4L}{e^{-4Lz}} = 1/4L,
\]
where we used l'H\^{o}pital's rule to calculate the limit. The same residue
also apply to all the other singularities.

For points in the complex plane uniformly bounded away from these
singularities, $|\phi_{\sig}(z)|$ is bounded, and therefore it can be shown
that the integral in \eqref{eq:alphan_complex} will tend to zero as we let $C$
become an arbitrarily large ellipse (not passing too close to any of the
singularities) by taking $\rho\rightarrow \infty$. However, as $\rho$ varies
smoothly, the ellipse does cross over singularity points, and these contribute
to the integral. For meromorphic functions, with a discrete set of isolated
singularities, we can simply sum over all contributions, and it can be shown
(see equation $10$ in \citep{Elliot64} and the subsequent discussion) that
\[
\alpha_n = -2\sum_{k=-\infty}^{\infty}\frac{r_k}{\sqrt{z_k^2-1}\left(z_k\pm
    \sqrt{z_k^2-1}\right)^n},
\]
where $z_k$ is the singularity point $i(\pi + 2\pi k)/4L$ with corresponding 
residue $r_k$. Substituting the results for our chosen function, we have
\[
\alpha_n = \sum_{k=-\infty}^{\infty}\frac{1/4L} {\sqrt{\left(i(\pi+2\pi
      k)/4L\right)^2-1}\left(i(\pi+2\pi k)/4L\pm \sqrt{\left(i(\pi+2\pi
        k)/4L\right)^2-1}\right)^n}.
\]
A routine simplification leads to the following\footnote{On first look, it
  might appear that $\alpha_n$ takes imaginary values for even $n$, due to the
  $i^{n+1}$ factor, despite $\alpha_n$ being equal to a real-valued
  integral. However, it can be shown that $\alpha_n=0$ for even $n$. This
  additional analysis can also be used to slightly tighten our final results in
  terms of constants in the exponent, but it was not included for simplicity.}:
\[
\alpha_n = \sum_{k=-\infty}^{\infty}\frac{1/4L} {i^{n+1}\sqrt{\left((\pi+2\pi
      k)/4L\right)^2+1}\left((\pi+2\pi k)/4L\pm \sqrt{\left((\pi+2\pi
        k)/4L\right)^2+1}\right)^n}.
\]
It can be verified that $\pm$ should be chosen according to $\indct{k\geq
  0}$. Therefore,
\begin{align*}
  |\alpha_n| &= \sum_{k=-\infty}^{\infty}\frac{1/4L}
  {\sqrt{\left((\pi+2\pi k)/4L\right)^2+1}\left(|\pi+2\pi k|/4L+ \sqrt{\left((\pi+2\pi k)/4L\right)^2+1}\right)^n}\\
  & \leq \sum_{k=-\infty}^{\infty}\frac{1/4L} {\left(|\pi+2\pi
      k|1/4L+1\right)^n} \leq \frac{1/4L}{(1+\pi/4L)^n}+
  2\sum_{k=1}^{\infty}\frac{1/4L}
  {\left(1+\pi(1+2k)/4L\right)^n}\\
  &\leq \frac{1/4L}{(1+\pi/4L)^n}+\int_{k=0}^{\infty}\frac{1/2L}
  {\left(1+\pi(1+2k)/4L\right)^n}dk\\
\end{align*}
Solving the integral and simplifying gives us 
\[
|\alpha_n|\leq
\frac{1}{(1+\pi/4L)^n}\left(1/4L+\frac{1+\pi/4L}{\pi(n-1)}\right).
\]
Since $n\geq 2$, the result in the lemma follows.
\end{proof}

\begin{lemma}\label{lem:t_bound}
  For any non-negative integer $n$ and $j=0,1,\ldots,n$, let $t_{n,j}$ be the
  coefficient of $x^j$ in $T_{n}(x)$. Then $t_{n,j}=0$ for any $j$ with a
  different parity than $n$, and for any $j>0$,
\[
|t_{n,j}| \leq \frac{e^{n+j}}{\sqrt{2\pi}}
\]

\end{lemma}
\begin{proof}
  The fact that $t_{n,j}=0$ for $j,n$ with different parities, and
  $|t_{n,0}|\leq 1$ is standard. Using an explicit formula from the literature
  (see \citep{Mason03}, pg. 24), as well as Stirling approximation, we have
  that
\begin{align*}
  &|t_{n,j}| ~=~
  2^{n-(n-j)-1}\frac{n}{n-\frac{n-j}{2}}\binom{n-\frac{n-j}{2}}{\frac{n-j}{2}}
  ~=~ \frac{2^j n}{n+j}\frac{\left(\frac{n+j}{2}\right)!}{\left(\frac{n-j}{2}\right)!j!}\\
  &\leq \frac{2^j n}{j!(n+j)}\left(\frac{n+j}{2}\right)^{j} ~=~
  \frac{n(n+j)^{j}}{(n+j)j!}
  ~\leq~ \frac{n(n+j)^{j}}{(n+j)\sqrt{2\pi j}(j/e)^j} ~=~ \frac{ne^j}{(n+j)\sqrt{2\pi j}}\left(1+\frac{n}{j}\right)^{j}\\
  &\leq \frac{ne^j}{(n+j)\sqrt{2\pi j}}e^n.
\end{align*}
from which the lemma follows.
\end{proof}

We are now in a position to prove a bound on B. As discussed earlier,
$\phi_{\sig}(x)$ in the domain $[-1,+1]$ equals the expansion
$\sum_{n=0}^{\infty}\alpha_{n}T_{x}$. The error resulting from truncating the
Chebyshev expanding at index $N$, for any $x\in [-1,+1]$, equals
\[
\left|\phi_{\sig}(x)-\sum_{n=0}^{N}\alpha_{n}T_{n}(x)\right| =
\left|\sum_{n=N+1}^{\infty}\alpha_{n}T_{n}(x)\right| \leq
\sum_{n=N+1}^{\infty}|\alpha_{n}|,
\]
where in the last transition we used the fact that $|T_{n}(x)|\leq 1$. Using
\lemref{lem:a_bound} and assuming $N>0$, this is at most
\[
\sum_{n=N+1}^{\infty}\frac{1/2L+1/\pi}{(1+\pi/4L)^n}
 = \frac{2+4L/\pi}{\pi(1+\pi/4L)^N}.
\]
In order to achieve an accuracy of less than $\epsilon$ in the approximation, 
we need to equate this to $\epsilon$ and solve for $N$, i.e.
\begin{equation}\label{eq:N_bound}
N= \left\lceil \log_{1+\pi/4L}\left(\frac{2+4L/\pi}{\pi\epsilon}\right) 
\right\rceil
\end{equation}

The series left after truncation is $\sum_{n=0}^{N}\alpha_n T_{n}(x)$, which we
can write as $\sum_{j=0}^{N}\beta_j x^j$. Using \lemref{lem:a_bound} and
\lemref{lem:t_bound}, the absolute value of the coefficient $\beta_j$ for $j>1$
can be upper bounded by
\begin{align*}
  &\sum_{n=j..N, n=j \mod 2}|a_n||t_{n,j}| \leq \sum_{n=j..N, n=j \mod 2}
  \frac{1/2L+1/\pi}{(1+\pi/4L)^n}\frac{e^{n+j}}{\sqrt{2\pi}}\\
  &=\frac{(1/2L+1/\pi) e^j}{\sqrt{2\pi}}\sum_{n=j..N, n=j \mod 2}\left(\frac{e}{1+\pi/4L}\right)^n\\
  &=\frac{(1/2L+1/\pi) e^j}{\sqrt{2\pi}}\left(\frac{e}{1+\pi/4L}\right)^j \sum_{n=0}^{\lfloor \frac{N-j}{2}\rfloor}\left(\frac{e}{1+\pi/4L}\right)^{2n}\\
  &\leq \frac{(1/2L+1/\pi)
    e^j}{\sqrt{2\pi}}\left(\frac{e}{1+\pi/4L}\right)^j
  \frac{(e/(1+\pi/4L))^{N-j+2}-1}{(e/(1+\pi/4L))^2-1}.
\end{align*}
Since we assume $L\geq 3$, we have in particular $e/(1+\pi/4L)>1$, so we can 
upper bound the expression above by dropping the $1$ in the numerator, to get
\begin{align*}
  \frac{1/2L+1/\pi}{\sqrt{2\pi}((e/(1+\pi/4L))^2-1)}\left(\frac{e}{1+\pi/4L}\right)^{N+2}e^{j}.
\end{align*}

The cases $\beta_0,\beta_1$ need to be treated separately, due to the different
form of the bounds on $\alpha_0,\alpha_1$. Repeating a similar analysis (using
the actual values of $t_{n,1},t_{n,0}$ for any $n$), we get
\begin{align*}
&\beta_0 \leq 1+\frac{1}{\pi}+\frac{2L}{\pi^2}\\ &\beta_1 \leq 
2+\frac{3(1+2L/\pi)(4L+\pi)}{2\pi^2}.
\end{align*}

Now that we got a bound on the $\beta_j$, we can plug it into the bound on $B$, and get
\begin{align*}
  &B = \sum_{j=0}^{N}2^j \beta_j^2
  \leq \beta_0^2+2\beta_1^2+\sum_{j=2}^{N}\left(\frac{1/2L+1/\pi}{\sqrt{2\pi}((e/(1+\pi/4L))^2-1)}\right)^2\left(\frac{e}{1+\pi/4L}\right)^{2N+4}(2e^{2})^j\\
  &\leq\beta_0^2+2\beta_1^2+\left(\frac{1/2L+1/\pi}{\sqrt{2\pi}((e/(1+\pi/4L))^2-1)}\right)^2\left(\frac{e}{1+\pi/4L}\right)^{2N+4}\frac{(2e^2)^{N+1}}{e^2-1}\\
  &=\beta_0^2+2\beta_1^2+\frac{2(1/2L+1/\pi)^2e^6}{(e^2-1)2\pi((e/(1+\pi/4L))^2-1)^2(1+\pi/4L)^4}\left(\frac{\sqrt{2}e^2}{1+\pi/4L}\right)^{2N}.
\end{align*}
To make the expression more readable, we use the (rather arbitrary) assumption 
that $L\geq 3$. In that case, by some numerical calculations, it is not 
difficult to show that we can upper bound the above by
\[
2L^4+0.15\left(\frac{\sqrt{2}e^2}{1+\pi/4L}\right)^{2N} \leq 2L^4+0.15(2e^4)^N.
\]
Combining this with \eqref{eq:N_bound}, we get that this is upper bounded by
\[
2L^4+0.15(2e^4)^{\log_{1+\pi/4L}\left(\frac{2+4L/\pi}{\pi\epsilon}\right)+1},
\]
or at most
\[
2L^4+\exp\left(\frac{\log(2e^4)\log\left(\frac{2+4L/\pi}{\pi\epsilon}\right)}{\log(1+\pi/4L)}+3\right).
\]
Using the fact that $\log(1+x)\geq x-x^2$ for $x\geq 0$, and the assumption 
that $L\geq 3$, we can bound the exponent by
\[
\frac{\log(2e^4)\log\left(\frac{2+4L/\pi}{\pi\epsilon}\right)}{\frac{\pi}{4L}\left(1-\frac{\pi}{8L}\right)}+3
\leq 7\log(2L/\epsilon)L+3.
\]
Substituting back, we get the result stated in \lemref{lem:sig}.

\section{The $\phi_{\erf}(\cdot)$ Function} \label{sec:erf}

In this section, we prove a result anaologous to \lemref{lem:sig}, using the
$\phi_{\erf}(\cdot)$ transfer function. In a certain sense, it is stronger,
because we can show that $\phi_{\erf}(\cdot)$ actually belongs to $P_{B}$ for
sufficiently large $B$. However, the resulting bound is worse than
\lemref{lem:sig}, as it depends on $\exp(L^2)$ rather than
$\exp(L)$. However, the proof is much simpler, which helps to illustrate the
technique.

The relevant lemma is the following:
\begin{lemma}\label{lem:erf}
  Let $\phi_{\erf}(\cdot)$ be as defined in \eqref{eq:erfpw}, where for
  simplicity we assume $L\geq 3$. For any $\epsilon>0$, let
\[
B\leq
\frac{1}{4}+2L^2\left(1+3\pi e L^2 e^{4\pi L^2}\right).
\]
Then $\phi_{\erf}(\cdot)\in P_{B}$. 
\end{lemma}

\begin{proof}
  By a standard fact, $\phi_{\erf}(\cdot)$ is equal to its infinite Taylor
  series expansion at any point, and this series equals
\[
\phi_{\erf}(a) =
\frac{1}{2}+\frac{1}{\sqrt{\pi}}\sum_{n=0}^{\infty}\frac{(-1)^n
  (\sqrt{\pi}La)^{2n+1}}{n!(2n+1)}.
\]
Luckily, this is an infinite degree polynomial, and it is only left to
calculate for which values of $B$ does it belong to $P_{B}$. Plugging in the
coefficients in the bound on $B$, we get that
\begin{align*}
  &B\leq \frac{1}{4}+\frac{1}{\pi}\sum_{n=0}^{\infty}
  \frac{(2 \pi L^2)^{2n+1}}{(n!)^2(2n+1)^2 }
  \leq \frac{1}{4}+\frac{1}{\pi}\sum_{n=0}^{\infty}\frac{(2\pi L^2)^{2n+1}}{(n!)^2}\\
  &= \frac{1}{4}+2L^2\left(1+\sum_{n=1}^{\infty}\frac{(2\pi L^2)^{2n}}{(n!)^2}\right)
  \leq \frac{1}{4}+2L^2\left(1+\sum_{n=1}^{\infty}\frac{(2\pi L^2)^{2n}}{(n/e)^{2n}}\right)\\
  &= \frac{1}{4}+2L^2\left(1+\sum_{n=1}^{\infty}\left(\frac{2\pi eL^2}{n}\right)^{2n}\right).
\end{align*}
Thinking of $(2\pi eL^2/n)^{2n}$ as a continuous function of $n$, a simple
derivative exercise shows that it is maximized for $n=2\pi L^2$, with value
$e^{4\pi L^2}$. Therefore, we can upper bound the series in the expression
above as follows:
\begin{align*}
  &\sum_{n=1}^{\infty}\left(\frac{2\pi eL^2}{n}\right)^{2n}
  = \sum_{n=1}^{\lfloor 2\sqrt{2}\pi e L^2 \rfloor}\left(\frac{2\pi eL^2}{ n}\right)^{2n} + \sum_{n=\lceil 2\sqrt{2}\pi eL^2 \rceil}^{\infty} \left(\frac{2\pi eL^2}{ n}\right)^{2n}\\
  &\leq 2\sqrt{2}\pi eL^2e^{4\pi L^2}+\sum_{n=\lceil
    2\sqrt{2}\pi eL^2 \rceil}^{\infty}\left(\frac{1}{2}\right)^{n}\leq
  3\pi e L^2 e^{4\pi L^2}.
\end{align*}
where the last transition is by the assumption that $L\geq 3$. Substituting 
into the bound on $B$, we get the result stated in the lemma.
\end{proof}

\end{document}